\DeclareMathOperator*{\argmax}{arg\,max}
\newcommand{\sumi}{\sum\limits_i}
\newcommand{\sumj}{\sum\limits_j}
\newcommand{\sumk}{\sum\limits_k}
\newcommand{\sumij}{\sum\limits_{ij}}
\newcommand{\sx}{x_{ij}}
\newcommand{\sbp}{bp_{ij}}
\newcommand{\sumijx}[1]{\sumij\sx{}#1}
\newcommand{\sProb}{Prob_i(\sbp)}
\newcommand{\sCost}{Cost_i(\sbp)}
\newcommand{\sV}{V_{ij}}
\newcommand{\sW}{W_{ij}^{(k)}}
\newcommand{\sB}{B^{(k)}}
\newcommand{\sBudget}{Budget^{(k)}}
\newcommand{\sROI}{ROI^{(k)}}
\newcommand{\inRange}[1]{\in\{1,2,...,#1\}}
\newcommand{\sCPP}{CPP_j}
\newcommand{\sCR}{CR_j}
\newcommand{\sPPI}{PPI_{ij}}
\newcommand{\sCPI}{\sCPP\sPPI}
\newcommand{\sRevenuePforP}{\sumijx{\sCPI\sProb}}
\newcommand{\sRevenuePforU}{\sumijx{(1+\sCR)\sCost}}
\newcommand{\sPerformance}{\sumijx{\sPPI\sProb}}
\newcommand{\sBiddingCost}{\sumijx{\sCost}}
\newcommand{\salpha}{\alpha^{(k)}}
\newcommand{\szeta}{\zeta^{(k)}}
\newcommand{\sbeta}{\beta_i}
\newcommand{\seta}{\eta_i}
\newcommand{\sgamma}{\gamma_{ij}}
\newcommand{\sF}{F_{ij}}
\newcommand{\sS}{S_{ij}}
\newcommand{\sG}{G_i}
\newcommand{\valpha}{\vec{\alpha}}
\newcommand{\pprob}{\phi}
\newcommand{\pcost}{\psi}
\newcommand{\uff}{\mathscr{F}}
\newcommand{\uf}{f(bp; \pprob, \pcost, p(x))}
\newcommand{\dspresourceconstraint}{\sumij \sx \sW(\sbp) \le \sB}
\newcommand{\ammkpresourceconstraint}{\sumij \sx \sW(\sV) \le \sB}
\newcommand{\assignmentconstraint}{\sumj \sx \le 1}
\newcommand{\scoreconstraint}{\sbeta \ge \sS(\vec{\alpha})}
\newcommand{\ortbbp}{\sqrt{\frac{c\sCPI}{ROI}(1+\frac{1}{\lambda})+c^2}-c}
\newcommand{\dbbp}{\frac{\sCPI}{ROI}(1+\frac{1}{\alpha})}
\newcommand{\liniter}{\sbp^{'}=\frac{ActualROI_i(\sbp)}{ROI}Bid}
\newcommand{\ortbiter}{\lambda^{'}=\frac{ROI}{ActualROI(\lambda)}\lambda}
\newcommand{\dbiter}{\alpha^{'} = \frac{ROI}{ActualROI(\alpha)}\alpha}
\newcommand{\mr}[2]{\multirow{#1}{*}{#2}}
\newcommand{\mc}[2]{\multicolumn{#1}{c|}{#2}}
\begin{document}

\title{Dual Based DSP Bidding Strategy and its Application}

\author{Huahui Liu}
\affiliation{%
  \institution{Alibaba Group}
}
\email{huahui.lhh@alibaba-inc.com}

\author{Mingrui Zhu}
\affiliation{%
  \institution{Alibaba Group}
}
\email{mingrui.zmr@alibaba-inc.com}

\author{Xiaonan Meng}
\affiliation{%
  \institution{Alibaba Group}
}
\email{xiaonan.mengxn@alibaba-inc.com}

\author{Yi Hu}
\affiliation{%
  \institution{Alibaba Group}
}
\email{erwin.huy@alibaba-inc.com}

\author{Hao Wang}
\affiliation{%
  \institution{Alibaba Group}
}
\email{longran.wh@alibaba-inc.com}

\begin{abstract}
In recent years, RTB(Real Time Bidding) becomes a popular online advertisement trading method.
During the auction, each DSP(Demand Side Platform) is supposed to
    evaluate current opportunity and respond with an ad and corresponding bid price.
It's essential for DSP to find an optimal ad selection and bid price determination strategy
    which maximizes revenue or performance under budget and ROI(Return On Investment) constraints
    in P4P(Pay For Performance) or P4U(Pay For Usage) mode.
We solve this problem by
    1) formalizing the DSP problem as a constrained optimization problem,
    2) proposing the augmented MMKP(Multi-choice Multi-dimensional Knapsack Problem) with general solution,
    3) and demonstrating the DSP problem is a special case of the augmented MMKP and deriving specialized strategy.
Our strategy is verified through simulation and outperforms state-of-the-art strategies in real application.
To the best of our knowledge, our solution is the first dual based DSP bidding framework
    that is derived from strict second price auction assumption and
    generally applicable to the multiple ads scenario with various objectives and constraints.
\end{abstract}

\keywords{Computational Advertising, Real Time Bidding, Demand Side Platform, Bidding Strategy}

\maketitle

\section{Introduction} \label{Introduction}

In recent years, RTB(Real Time Bidding) becomes a popular online advertisement trading method.
There are three major roles in the market, namely SSP(Supply Side Platform), DSP(Demand Side Platform), and AdX(Ad Exchange).
SSP controls huge amount of websites and earns money by supplying impressions.
DSP holds a lot of advertisers and makes profit through fulfilling their demands.
AdX, an online advertisement exchange, docks SSPs and DSPs and holds auctions.

In a typical scenario, an audience visits one of the SSP's websites, then the AdX is informed and an auction is initiated.
The AdX broadcasts bid request to DSPs and waits for a short time(e.g. 100ms).
Each DSP is supposed to evaluate current opportunity and respond with an ad and corresponding bid price.
The AdX gathers bid responses arriving before deadline and determines the winner and its bidding cost.
Finally, the AdX notifies the SSP about the auction result and the SSP serves the winner's ad to the audience.

\begin{figure}[!h]
\centering
\includegraphics[width=1.0\linewidth]{./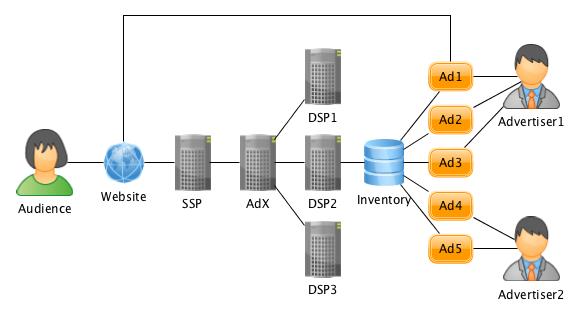}
\caption{Real Time Bidding}
\end{figure}

There are two popular payment modes for advertisers
\begin{enumerate}
\item \textbf{P4P}(Pay For Performance): the advertiser sets a CPP(Cost Per Performance)
    and pays DSP the CPP times the units of performance delivered by DSP(e.g. 1\$/click*10clicks=10\$).
\item \textbf{P4U}(Pay For Usage): the advertiser sets a CR(Commission Rate)
    and pays DSP the total bidding cost plus the fraction of it as commission(e.g. (1+10\%)*100\$=110\$).
\end{enumerate}

DSP is interested in optimizing one of the following objectives
\begin{enumerate}
\item \textbf{Revenue}: the total amount of money(e.g. 50\$) earned from advertisers through either payment mode mentioned above.
\item \textbf{Performance}: the total units of performance(e.g. 20 clicks) delivered to advertisers.
\end{enumerate}

During the optimization, several constraints must be satisfied
\begin{enumerate}
\item \textbf{Budget Upper Bound}: the maximum amount of money the advertiser is willing to spend in DSP for a certain period of time
  (e.g. 100\$/day).
\item \textbf{ROI Lower Bound}: the minimum value of ROI(Return On Investment) which is defined as,
  for DSP, the revenue earned from advertisers over the bidding cost payed to AdX
  (e.g. DSP ROI is 1.1 when DSP earns 110\$ and pays 100\$) and,
  for advertiser, the performance delivered by DSP over the money spent in DSP
  (e.g. advertiser ROI is 0.8 when advertiser spends 100\$ for 80 clicks).
\end{enumerate}

It's essential for DSP to find an optimal ad selection and bid price determination strategy
    which maximizes revenue or performance under budget and ROI constraints in P4P or P4U mode.
We solve this problem by

\begin{enumerate}
\item formalizing the DSP problem as a constrained optimization problem(Section \ref{Formalization}),
\item proposing the augmented MMKP(Multi-choice Multi-dim-ensional Knapsack Problem) with general solution(Section \ref{AugmentedMMKP}),
\item and demonstrating the DSP problem is a special case of the augmented MMKP and deriving specialized strategy(Section \ref{Solution}).
\end{enumerate}

Our strategy is verified through simulation(Section \ref{Simulation}) and
    outperforms state-of-the-art strategies in real application(Section \ref{Application}).
To the best of our knowledge, our solution is the first dual based DSP bidding framework
    that is derived from strict second price auction assumption and
    generally applicable to the multiple ads scenario with various objectives and constraints.
These are the main contributions of this document.

Before further discussion, it's worth to mention several points about our problem configuration.
First, PPI(Performance Per Impression) is defined as the expected performance of one impression with certain ad
    and its accurate prediction is of great importance in performance estimation.
However, PPI prediction is beyond the scope of this document
    and we assume that the PPI is always explicitly provided in the rest of our discussion.
Second, it is assumed that all advertisers agree to the same payment mode and performance metric
    and DSP prefers to optimize a pure objective rather than a hybrid one.
Third, the CPP in P4P mode or CR in P4U mode are set on the ad level,
    i.e. the advertiser is able to set different CPP or CR for his ads.
And the constraints are set on ad group level,
    e.g. the budget might be shared by ads of the same advertiser
    and DSP might be interested in controlling its global ROI.
At last, the ROI lower bound for advertiser could also be interpreted as the CPP upper bound
    which might be more familiar to some readers.

\section{Related Works}

\cite{M6D} suggests a linear bidding strategy which, given base price,
    bids in proportion to the relative quality of impression.
However, their method is a heuristic one and lacks theoretical foundations.

Based on calculus of variations, \cite{WeinanZhang2014} suggests a non-linear relationship between optimal bid price and KPIs.
However, their strategy is derived from first price auction assumption which doesn't hold in RTB.
Besides, winning rate is explicitly modeled as a function of bid price in \cite{WeinanZhang2014}.
To find the analytical solution of the optimal bid price,
    the winning rate function must be of specific forms,
    which makes their method inflexible.

Both win rate and winning price are estimated in \cite{XiangLi2014}, and the corresponding bidding strategy is provided.
However, their strategy doesn't consider any constraints(i.e. budget) which are common in real DSP applications.

While all above researches consider only one campaign,
    \cite{WeinanZhang2015} extends \cite{WeinanZhang2014} and proposes bidding strategy for multiple campaigns.
However, \cite{WeinanZhang2015} also shares the drawbacks of \cite{WeinanZhang2014} as listed above.

\cite{Joint2016} studies the joint optimization of multiple objectives with priorities.
\cite{Lift2016} argues that the bid price should be decided
    based on the performance lift rather than absolute performance value.
Risk management of RTB is discussed and risk-aware bidding strategy is proposed in \cite{Risk2017}.
By modeling the state transition of auction competition,
    the optimal bidding policy is derived in \cite{Reinforce2017} based on reinforcement learning theory.

The probability estimation of interested feedbacks plays a central role in performance based advertising.
CTR(Click Through Rate) prediction is of great importance and extensively studied by researchers.
FTRL-Proximal, an online learning algorithm, is proposed in \cite{Google2013} and sparse model is learned for CTR prediction.
In \cite{Facebook2014}, a hybrid model which combines decision trees with logistic regression
    outperforms either of these methods on their own.
In \cite{FFM2016}, field-aware factorization machines are used to predict CTR.
Compared with clicks, the conversions are even more rare and harder to predict.
To tackle the data sparseness, a hierarchical method is proposed in \cite{CVR} for CVR(Conversion Rate) prediction.
Feedbacks are usually delayed in practice and \cite{DelayedFeedback} tries to
    distinguish negative training samples without feedbacks eventually from those with delayed ones.

Bidding landscape is studied in \cite{YingCui2011} and log normal is used to model the distribution of winning price.
\cite{Wu2015} predicts win price with censored data, which utilizes both winning and losing samples in the sealed auction.
Traffic prediction for DSP is discussed in \cite{Traffic2016}.
Budget pacing is achieved through throttling in \cite{Throttle2015} and bid price adjustment in \cite{Pacing2013}.

Our work is mainly inspired by \cite{YeChen2011} in which compact allocation strategy,
    after modeling its problem as linear programming, is derived from complementary slackness.
Sealed second price auction is studied in \cite{SSPA1961}.
After all, DSP problem is a sort of online matching problem and \cite{Mehta} is an informative survey of this area.

\section{Formalization} \label{Formalization}

\subsection{Primal}

The DSP problem could be formalized as follows.
Once we bid $Impression$ with $Ad$, it results in gain $V$ and resource consumptions $W$, both of which are functions of $BidPrice$.
Our total gain should be maximized under resource constraints $B$ with $x$ and $BidPrice$ as variables.
In addition, each $Impression$ should be distributed to no more than one $Ad$.
To conquer the computational hardness, indicator variable $x$ is relaxed from $\{0, 1\}$ to $[0, 1]$.
Although most kinds of resources(e.g. budget) are sort of private and only accessible to very limited number of $Ad$s in practice,
    we assume, without loss of generality, that all resources are public and shared by all $Ad$s in this formalization.

\begin{alignat}{2}
    \max\limits_{\sx, \sbp} \quad & \sumij \sx \sV(\sbp) \quad    & {} \\
    \mbox{s.t.} \quad             & \dspresourceconstraint \quad  & \forall k \\
    \quad                         & \assignmentconstraint \quad   & \forall i \\
    \quad                         & \sx \ge 0 \quad               & \forall i,j
\end{alignat}

$i \inRange{N}$ is the index of $Impression$

$j \inRange{M}$ is the index of $Ad$

$k \inRange{K}$ is the index of $Constraint$

$\sx$ is a relaxed variable, indicating whether $Impression_i$ should be given to $Ad_j$

$\sbp$, short for $BidPrice_{ij}$, is a variable

$\sV(bp)$ is the gain function of $BidPrice$ with support $[0, \infty)$

$\sW(bp)$ is the $k$-th resource consumption function of $BidPrice$ with support $[0, \infty)$

$\sB$ is a resource limit constant

The above formalization might seem too abstract to capture the details of
    those practical objectives and constraints discussed in Section \ref{Introduction}.
To make things clearer, we
\begin{enumerate}
\item derive the expected winning probability and bidding cost under second price auction assumption(Section \ref{SecondPriceAuction}),
\item define the utility function family $\uff$ based on previous derivation(Section \ref{UtilityFunctionFamily}),
\item and show how to systematically encode those practical objectives and constraints into above formalization
    by setting $B$ and choosing $V(bp)$ and $W(bp)$ from $\uff$(Section \ref{ObjectivesAndConstraints}).
\end{enumerate}

\subsection{Second Price Auction} \label{SecondPriceAuction}

Most AdXes adopt sealed second price auction mechanism in which
    the DSP with the highest bid price wins and pays the second highest bid price.
For example, three DSPs bid 2\$, 1\$, 3\$ respectively, so the third DSP wins and pays 2\$. 
Furthermore, only the winner has access to the second highest bid price
    while the others observe nothing except the fact that they lose.

Due to the dynamic nature of auction, the outcome is random.
To model this uncertainty, $p_i(x)$ is defined as
    the distribution of the highest bid price among all other DSPs' bid prices for $Impression_i$ with support $[0, \infty)$.
In another word, the most competitive DSP will bid $x$ for $Impression_i$ with probability $p_i(x) \mathrm{d} x$.

To win $Impression_i$, our $BidPrice$ must be higher than $x$, but we will only pay $x$ eventually.
Then the expected winning probability and bidding cost for our DSP could be defined as follows.
As our $BidPrice$ goes infinite, we'll win $Impression_i$ with probability 1 and our bidding cost must be the mean of $p_i(x)$.

\begin{definition}
$Prob_i(BidPrice)= \int_0^{BidPrice} p_i(x) \mathrm{d} x$
\end{definition}

\begin{definition}
$Cost_i(BidPrice)= \int_0^{BidPrice} x p_i(x) \mathrm{d} x$
\end{definition}

It is the non-negativity property of $p_i(x)$ and the integral forms of $Prob_i(BidPrice)$ and $Cost_i(BidPrice)$
    which play a central role in our theory(Section \ref{UtilityFunctionFamily}).
Except that, we make little, if any, assumption about the distribution family of $p_i(x)$.
In addition, in some special cases, even explicit modeling of $p_i(x)$ is unnecessary
    (Section \ref{DSPDualBasedStrategy} \& \ref{DSPNumericOptimization}), which simplifies the implementation of our strategy.

Whenever it's mandatory, $p_i(x)$ could be modeled with method proposed by \cite{Wu2015}.
We could pick a distribution family $p(x; \theta)$(e.g. log normal) with parameter $\theta$ and
    learn a parameter predictor $\hat\theta(i)$ from historical bidding data which maximizes the following likelihood.

\begin{alignat}{1}
\prod\limits_{i \in Win} p_i(Cost_i; \hat\theta(i)) \prod\limits_{i \in Lose} \int_{BidPrice_i}^{\infty} p_i(x; \hat\theta(i)) \mathrm{d} x
\end{alignat}

The likelihood could be separated into two parts, i.e. one for the impressions we won and the other for those we lost.
For any $Impression_i$ that we won, the bid price of the most competitive DSP must be equal to our $Cost_i$, which suggests the first part.
Otherwise, the only thing for sure is that it must be higher than our $BidPrice_i$, which suggests the second part.

\subsection{Utility Function Family} \label{UtilityFunctionFamily}

The practical $V(bp)$ and $W(bp)$ in DSP problem come from a certain family
    which is defined here and whose properties are shown without proof.

\begin{definition}
$\uff$ is the function family that $\forall f \in \uff$ is of the form
    $ \uf = \pprob Prob(bp) + \pcost Cost(bp) = \int_0^{bp} (\pprob + \pcost x)p(x)dx $.
\end{definition}

\begin{theorem} \label{DerivationTheorem}
Given $\forall f \in \uff$, we have $f'=(\pprob + \pcost{}bp)p(bp)$.
\end{theorem}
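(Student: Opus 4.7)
The plan is to prove Theorem \ref{DerivationTheorem} by direct application of the Fundamental Theorem of Calculus to the integral representation of $f$. Since the theorem provides the equivalent form $f(bp;\pprob,\pcost,p(x)) = \int_0^{bp}(\pprob+\pcost x)p(x)\,dx$, the derivative with respect to the upper limit $bp$ is obtained by evaluating the integrand at $x=bp$.

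More explicitly, I would first split $f$ into its two natural constituents $\pprob\,Prob(bp)$ and $\pcost\,Cost(bp)$, and handle them separately. For the first term, $Prob(bp)=\int_0^{bp}p(x)\,dx$, so $\frac{d}{d(bp)}Prob(bp)=p(bp)$ by FTC. For the second term, $Cost(bp)=\int_0^{bp}x\,p(x)\,dx$, and differentiating the upper limit yields $\frac{d}{d(bp)}Cost(bp)=bp\cdot p(bp)$. Linearity of differentiation then gives
\begin{equation*}
f'(bp) = \pprob\,p(bp) + \pcost\,bp\cdot p(bp) = (\pprob+\pcost\,bp)\,p(bp),
\end{equation*}
which is the desired formula.

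The only technical hypothesis needed to invoke FTC is that the integrand $(\pprob+\pcost x)p(x)$ be continuous (or at least locally integrable with $p$ continuous at $bp$). Since $p_i(x)$ is a density on $[0,\infty)$, the statement should be read as holding wherever $p$ is continuous, which is standard in this kind of bidding-landscape modeling and consistent with the density families (e.g.\ log-normal) mentioned earlier in the paper. There is no real obstacle here — the theorem is essentially a bookkeeping lemma recording the derivative of the utility functions in $\uff$ so that later sections can differentiate objectives and constraints without redoing this computation each time.
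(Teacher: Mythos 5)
Your proof is correct: the paper states Theorem \ref{DerivationTheorem} explicitly ``without proof,'' and the direct Fundamental Theorem of Calculus argument applied to $f = \int_0^{bp}(\pprob+\pcost x)p(x)\,dx$ is clearly the intended justification. Your remark that the identity holds wherever $p$ is continuous is a reasonable and harmless sharpening of the regularity hypothesis the paper leaves implicit.
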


\begin{theorem} \label{ArgMaxTheorem}
Given $\forall f \in \uff$, we have $\argmax\limits_{bp} f = - \pprob / \pcost$.
\end{theorem}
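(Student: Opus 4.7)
The plan is to reduce the optimization to a first-order condition via Theorem \ref{DerivationTheorem}, then exploit the non-negativity of $p(x)$ to isolate the critical point.

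First I would invoke Theorem \ref{DerivationTheorem} to write $f'(bp) = (\pprob + \pcost \cdot bp)\, p(bp)$. Since $p$ is a probability density on $[0,\infty)$, it is non-negative almost everywhere, so the sign of $f'(bp)$ is governed by the linear factor $\pprob + \pcost \cdot bp$. Setting this factor to zero gives the candidate critical point $bp^{\ast} = -\pprob/\pcost$; zeros of $p$ itself only produce flat regions, not strict interior maxima, and can be handled as a degenerate side case.

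Next I would verify that $bp^{\ast}$ is indeed a global maximum rather than a minimum or saddle. The implicit regime of interest (the only regime in which the argmax lies in the support $[0,\infty)$ and corresponds to a gain-maximizing bid) is $\pprob > 0$ and $\pcost < 0$; under these signs, $\pprob + \pcost \cdot bp$ is positive for $bp < bp^{\ast}$ and negative for $bp > bp^{\ast}$, hence $f'$ changes from non-negative to non-positive at $bp^{\ast}$, certifying a global maximum of $f$ on $[0,\infty)$. Equivalently one could compute $f''(bp^{\ast}) = \pcost \cdot p(bp^{\ast}) \le 0$ as a local sufficient condition and then extend to a global statement using monotonicity of the linear factor.

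The step I expect to be the only mildly delicate one is the justification that the critical point is actually a maximum rather than just a stationary point: this relies on a sign assumption on $\pprob$ and $\pcost$ that is not spelled out in the theorem statement but is natural from the way $\uff$ is used to encode gains and costs. Everything else is a one-line consequence of Theorem \ref{DerivationTheorem} together with $p(x) \ge 0$.
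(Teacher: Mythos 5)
Your argument is correct and is essentially the only natural one: the paper states Theorem \ref{ArgMaxTheorem} without proof (the properties of $\uff$ are explicitly ``shown without proof''), and the route you take --- the first-order condition $f'(bp)=(\pprob+\pcost\,bp)\,p(bp)=0$ from Theorem \ref{DerivationTheorem} together with $p(x)\ge 0$ --- is exactly what the statement presupposes. Your caveat about signs is a genuine observation about the theorem as literally quantified (``$\forall f\in\uff$''): if, say, $\pprob$ and $\pcost$ are both positive then $f$ is non-decreasing and has no interior maximizer, so the claim needs the implicit restriction $\pprob>0$, $\pcost<0$; this restriction does hold in every instantiation the paper actually uses (e.g.\ $\pprob_{\sF}=(1+\alpha)\sCPI$ and $\pcost_{\sF}=-\alpha\,ROI$ in Section \ref{Application}, yielding the bid $\frac{\sCPI}{ROI}(1+\frac{1}{\alpha})$), and under it your sign-change/monotonicity argument correctly certifies a global maximum on $[0,\infty)$.
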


\begin{theorem} \label{ComparisonTheorem}
Given $\forall g,h \in \uff$ with shared $p(x)$ and $\pcost$, we have $\max\limits_{bp} g \ge \max\limits_{bp} f$
    if and only if $\pprob_g \ge \pprob_f$.
\end{theorem}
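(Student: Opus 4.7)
The plan is to reduce the comparison of the two maxima to a single algebraic identity, and then invoke Theorem~\ref{ArgMaxTheorem} to close both directions. Since $g,h\in\uff$ share the same $p(x)$ and $\pcost$, subtracting their integral representations cancels the $\pcost\,Cost(bp)$ term and leaves
\begin{equation*}
g(bp)-h(bp)=(\pprob_g-\pprob_h)\,Prob(bp)
\end{equation*}
for every $bp\ge 0$. Because $Prob(bp)\ge 0$, the sign of the pointwise gap $g-h$ is globally determined by the sign of $\pprob_g-\pprob_h$, and this single fact will carry the whole argument.

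For the direction $\pprob_g\ge\pprob_h \Rightarrow \max g\ge \max h$, I would plug in $bp_h^{\star}=-\pprob_h/\pcost$ (the maximizer of $h$ provided by Theorem~\ref{ArgMaxTheorem}) and write
\begin{equation*}
\max_{bp} g \;\ge\; g(bp_h^{\star}) \;=\; \max_{bp} h + (\pprob_g-\pprob_h)\,Prob(bp_h^{\star}),
\end{equation*}
which is $\ge \max h$ because the correction term is non-negative. For the converse I would argue the contrapositive, applying the symmetric bound with the roles of $g$ and $h$ swapped and evaluating at $bp_g^{\star}=-\pprob_g/\pcost$: if $\pprob_g<\pprob_h$, then $\max h \ge \max g + (\pprob_h-\pprob_g)\,Prob(bp_g^{\star})$, which contradicts $\max g\ge \max h$ as soon as that correction term is positive.

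The main obstacle I foresee is precisely this strictness issue in the contrapositive: the argument forces a strict separation of the maxima only when $Prob(bp_g^{\star})>0$. If $p$ places no mass on $[0,-\pprob_g/\pcost]$, then both $g$ and $h$ are identically zero on the range of interest and $\max g=\max h=0$ regardless of $\pprob$, so the iff still holds but has to be recorded as a vacuous equality in a separate case. Under the mild assumption that $p_i$ has positive density on $(0,\infty)$ (consistent with the log-normal modelling discussed in Section~\ref{SecondPriceAuction}), this degeneracy does not arise and the equivalence goes through cleanly.
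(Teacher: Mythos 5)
The paper states Theorem~\ref{ComparisonTheorem} without proof (Section~\ref{UtilityFunctionFamily} explicitly says the properties of $\uff$ are ``shown without proof''), so there is no official argument to compare against; your proposal has to stand on its own, and it essentially does. The cancellation identity $g(bp)-h(bp)=(\pprob_g-\pprob_h)\,Prob(bp)$ is the right reduction, and the forward direction follows even more directly than you state it: pointwise dominance $g\ge h$ on all of $[0,\infty)$ already gives $\max g\ge\max h$ without ever invoking the maximizer from Theorem~\ref{ArgMaxTheorem}.

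One correction to your closing discussion: in the degenerate case where $p$ puts no mass on the relevant interval, the biconditional does \emph{not} ``still hold'' --- it genuinely fails. There $\max g=\max h=0$, so $\max g\ge\max h$ is true while $\pprob_g\ge\pprob_h$ may be false, which kills the ``only if'' direction outright. So the positive-mass assumption you append is not a cosmetic regularity condition; it is a necessary hypothesis for the theorem as literally stated, and your proof should present it as such rather than as a vacuous subcase. With that hypothesis in place (and it is consistent with the log-normal modelling the paper adopts, and with the fact that Theorem~\ref{ArgMaxTheorem} itself implicitly assumes $-\pprob/\pcost$ lies in the interior of the support), your two-sided argument via the contrapositive is complete. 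A minor point worth flagging for the reader: the theorem's statement mixes up $h$ and $f$ in its own notation, and your proof correctly treats the pair as $g,h$ throughout.
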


\begin{theorem} \label{SecondDerivationTheorem}
Given $\forall g,h \in \uff$ with shared $p(x)$,
    we have $\frac{d^2h}{dg^2} = \frac{\pprob_g \pcost_h - \pcost_g \pprob_h}{(\pprob_g + \pcost_g bp)^3 p(bp)}$.
\end{theorem}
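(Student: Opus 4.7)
The plan is to treat both $g$ and $h$ as functions of the scalar variable $bp$ and to compute $dh/dg$ and $d^2h/dg^2$ by the chain rule, viewing $h$ as a function of $g$ implicitly through $bp$. Theorem~\ref{DerivationTheorem} is the single ingredient I need to start: it gives closed forms $g'(bp) = (\pprob_g + \pcost_g\, bp) p(bp)$ and $h'(bp) = (\pprob_h + \pcost_h\, bp) p(bp)$, where the prime denotes differentiation with respect to $bp$.

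First I would form $dh/dg = h'/g'$. The crucial observation here is that the factor $p(bp)$ appears in both numerator and denominator and therefore cancels, leaving the clean rational expression $dh/dg = (\pprob_h + \pcost_h\, bp)/(\pprob_g + \pcost_g\, bp)$. This is where the shared $p(x)$ hypothesis does its work; without it the cancellation would not occur.

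Next I would apply the chain rule one more time: $d^2h/dg^2 = (1/g')\cdot d(dh/dg)/dbp$. The inner derivative is a routine quotient-rule computation on the rational expression above. Its numerator simplifies because the cross term $\pcost_h \pcost_g\, bp$ appears with opposite signs and cancels, leaving precisely $\pprob_g \pcost_h - \pcost_g \pprob_h$, while its denominator becomes $(\pprob_g + \pcost_g\, bp)^2$. Dividing by $g' = (\pprob_g + \pcost_g\, bp) p(bp)$ then supplies the missing cube in the denominator and reintroduces a single $p(bp)$, giving exactly the claimed formula.

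There is no real obstacle: the whole argument is a two-step chain rule with an algebraic cancellation. The only thing to be careful about is the implicit assumption that $g$ is locally invertible as a function of $bp$, which holds wherever $g' \neq 0$, i.e.\ wherever $(\pprob_g + \pcost_g\, bp) p(bp) \neq 0$; this is precisely the region in which the right-hand side of the stated identity is well defined, so no additional hypothesis is needed.
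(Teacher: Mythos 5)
Your computation is correct: the two applications of the chain rule, the cancellation of the shared $p(bp)$ factor in $dh/dg$, and the quotient-rule step that kills the cross term $\psi_g\psi_h\,bp$ all check out and yield exactly the stated formula, including the cube and the single surviving $p(bp)$ in the denominator. The paper itself states this theorem without proof (it says the properties of $\mathscr{F}$ are ``shown without proof''), so there is nothing to compare against, but your argument is the natural one and is complete; your closing remark about local invertibility where $g'\neq 0$ is the right caveat to flag.
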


All above theorems are listed here for summarization purpose and will be referenced when actually used.
It's safe to skip them for now and come back later.

\subsection{Objectives and Constraints} \label{ObjectivesAndConstraints}

There are 4 practical objectives as listed in Table \ref{TableObjectives},
    i.e. revenue and performance objectives in P4P and P4U modes.
It's straightforward to encode those objectives into standard form by definition.

\begin{table*}
\caption{Practical Objectives\label{TableObjectives}}
\begin{center}
\begin{tabular}{|c|c|c|c|c|}
\hline
\mr{2}{Mode}   & \mr{2}{Type}       & \mr{2}{Definition}  & \mc{2}{$\sV(bp)$} \\
\cline{4-5}
               &                    &                     & $\pprob$ & $\pcost$ \\
\hline
\mr{2}{P4P}    & Revenue            & $\sRevenuePforP$    & $\sCPI$  & 0 \\
\cline{2-5}
               & Performance        & $\sPerformance$     & $\sPPI$  & 0 \\
\hline
\mr{2}{P4U}    & Revenue            & $\sRevenuePforU$    & 0        & $1+\sCR$ \\
\cline{2-5}
               & Performance        & $\sPerformance$     & $\sPPI$  & 0 \\
\hline
\end{tabular}
\end{center}
\end{table*}

There are 6 practical constraints as listed in Table \ref{TableConstraints},
    i.e. budget, DSP ROI and advertiser ROI constraints in P4P and P4U modes.
Constraints like budget could be expressed in standard form naturally.
Others, though not so obvious at the first glance, could be rewritten into standard form as well.

Take DSP ROI constraint in P4P mode for example.
By definition, we have
\begin{alignat}{1}
\frac{\sRevenuePforP}{\sBiddingCost}\ge\sROI
\end{alignat}
After multiplying both sides with the denominator,
    subtracting both sides with the nominator,
    and combining items by $\sx$, we have
\begin{alignat}{1}
\sumijx{\{\sROI\sCost-\sCPI\sProb\}}\le0
\end{alignat}
It's easy to encode this constraint into standard form with $\pprob=-\sCPI$, $\pcost=\sROI$ and $\sB=0$.

\begin{table*}
\caption{Practical Constraints\label{TableConstraints}}
\begin{center}
\begin{tabular}{|c|c|c|c|c|c|}
\hline
\mr{2}{Mode} & \mr{2}{Type}   & \mr{2}{Definition}                             & \mc{2}{$\sW(bp)$}   & \mr{2}{$\sB$} \\
\cline{4-5}
             &                &                                                & $\pprob$            & $\pcost$         & \\
\hline
\mr{3}{P4P}  & Budget         & $\sRevenuePforP \le \sBudget$                  & $\sCPI$             & 0                & $\sBudget$ \\
\cline{2-6}
             & DSP ROI        & $\frac{\sRevenuePforP}{\sBiddingCost}\ge\sROI$ & $-\sCPI$            & $\sROI$          & 0 \\
\cline{2-6}
             & Advertiser ROI & $\frac{\sPerformance}{\sRevenuePforP}\ge\sROI$ & $\sCPI\sROI-\sPPI$  & 0                & 0 \\
\hline
\mr{3}{P4U}  & Budget         & $\sRevenuePforU \le \sBudget$                  & 0                   & $(1+\sCR)$       & $\sBudget$ \\
\cline{2-6}
             & DSP ROI        & $\frac{\sRevenuePforU}{\sBiddingCost}\ge\sROI$ & 0                   & $\sROI-(1+\sCR)$ & 0 \\
\cline{2-6}
             & Advertiser ROI & $\frac{\sPerformance}{\sRevenuePforU}\ge\sROI$ & $-\sPPI$            & $\sROI(1+\sCR)$  & 0 \\
\hline
\end{tabular}
\end{center}
\end{table*}

\section{Augmented MMKP} \label{AugmentedMMKP}

\subsection{Primal}

Now we propose the augmented MMKP which could be formalized as follows and seen as an extension of MMKP with infinitely many sub-choices.
In the original MMKP, both $V$ and $W$ are constants, while, in the augmented MMKP, $V$ is variable and $W$ becomes function of $V$.
Our main-choice and sub-choice are indicated by $x$ and corresponding $V$ respectively.

\begin{alignat}{2}
    \max\limits_{\sx, \sV} \quad & \sumij \sx \sV \quad              & {} \\
    \mbox{s.t.} \quad            & \ammkpresourceconstraint \quad    & \forall k \\
    \quad                        & \assignmentconstraint \quad       & \forall i \\
    \quad                        & \sx \ge 0 \quad                   & \forall i,j
\end{alignat}

$i \inRange{N}$ is the index of $Item$

$j \inRange{M}$ is the index of $User$

$k \inRange{K}$ is the index of $Constraint$

$\sx$ is a relaxed variable, indicating whether $Item_i$ should be given to $User_j$

$\sV$ is a gain variable

$\sW(V)$ is the $k$-th resource consumption function of $V$ with support $[0, \infty)$

$\sB$ is a resource limit constant

\subsection{Dual}

We define several basic functions and show the dual of augmented MMKP based on them.

\begin{definition}
$\sF(V; \valpha) = V - \sumk \salpha \sW(V)$
\end{definition}

\begin{definition}
$\sV(\valpha) = \argmax\limits_V \sF(V; \valpha)$
\end{definition}

\begin{definition}
$\sS(\valpha) = \max\limits_V \sF(V; \valpha)$
\end{definition}

\begin{alignat}{2}
    \min\limits_{\salpha, \sbeta} \quad & \sumk \salpha \sB + \sumi \sbeta \quad   & {} \\
    \mbox{s.t.} \quad                   & \scoreconstraint \quad                   & \forall i,j \\
    \quad                               & \salpha \ge 0 \quad                      & \forall k \\
    \quad                               & \sbeta \ge 0 \quad                       & \forall i
\end{alignat}

$\sF(V; \valpha)$ serves as sort of score function which is used to estimate
    the utility of distributing $Item_i$ to $User_j$ with sub-choice $V$.
It could be interpreted as the compromised gain function in which
    our original gain $V$ is degenerated by resource consumption $W$ with opportunity price $\alpha$.

\subsection{Strong Duality}

The strong duality of augmented MMKP is provable under mild assumption.
A brief proof is provided here and more details are revealed in the appendix.

\begin{theorem} \label{StrongDualityTheorem}
If $W(V)$ is convex function of $V$, strong duality of augmented MMKP holds.
\end{theorem}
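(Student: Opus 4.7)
The plan is to rewrite the augmented MMKP primal in a form whose convexity is manifest, then appeal to standard convex-optimization duality and finally match the resulting dual to the one stated in the paper. The key trick is to replace the bilinear product $x_{ij}V_{ij}$ with a new variable $y_{ij}=x_{ij}V_{ij}$, so that $x_{ij}W_{ij}^{(k)}(V_{ij})$ becomes the \emph{perspective} $x_{ij}W_{ij}^{(k)}(y_{ij}/x_{ij})$ of $W_{ij}^{(k)}$. Since the perspective of a convex function is jointly convex in its arguments (with the natural extension at $x_{ij}=0$), and $\sum_{ij}y_{ij}$ is linear, the reformulated problem is a concave maximization over a convex feasible set with linear constraints $\sum_j x_{ij}\le 1$, $x_{ij}\ge 0$, plus the convex inequalities $\sum_{ij}x_{ij}W_{ij}^{(k)}(y_{ij}/x_{ij})\le B^{(k)}$.

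Next I would verify a Slater-type qualification: taking every $x_{ij}$ slightly positive and every $y_{ij}$ small (equivalently, every $V_{ij}$ small) keeps all resource usages strictly below $B^{(k)}$ provided $B^{(k)}>0$, so the reformulated primal has a strictly feasible interior point. Standard convex-analytic strong duality (Slater) then yields that the primal optimum equals the Lagrangian dual optimum.

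The bulk of the remaining work is a routine but careful Lagrangian calculation. Introducing multipliers $\alpha^{(k)}\ge 0$ for the resource constraints and $\beta_i\ge 0$ for the assignment constraints, the Lagrangian becomes
\begin{align}
L = \sum_{ij} x_{ij}\Bigl[V_{ij}-\sum_k\alpha^{(k)}W_{ij}^{(k)}(V_{ij})-\beta_i\Bigr]+\sum_k\alpha^{(k)}B^{(k)}+\sum_i\beta_i ,
\end{align}
after substituting $V_{ij}=y_{ij}/x_{ij}$ back in. Maximizing first over $V_{ij}$ gives the bracket value $F_{ij}(V_{ij};\vec\alpha)-\beta_i$, whose supremum is $S_{ij}(\vec\alpha)-\beta_i$ by definition of $S_{ij}$. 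Then maximizing over $x_{ij}\ge 0$ is finite only when $S_{ij}(\vec\alpha)\le\beta_i$ for every $i,j$, in which case the supremum is attained at $x_{ij}=0$ with value $0$. Collecting the constant terms reproduces exactly the stated dual.

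The step I expect to be most delicate is the perspective reformulation at the boundary $x_{ij}=0$: one must argue that allowing $y_{ij}=0$ when $x_{ij}=0$ (with the convention $0\cdot W(0/0)=0$) neither changes the optimal value nor the attainability, so that the closure of the perspective is the right convex object to work with. Once that technicality is dispatched, convexity plus Slater makes strong duality immediate, and the Lagrangian computation above is purely mechanical. The more detailed bookkeeping and the verification of the attainment arguments would then be relegated to the appendix as the paper promises.
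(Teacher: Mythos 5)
Your route is genuinely different from the paper's. The paper never convexifies the primal: it introduces five auxiliary problems (P, D, DD, d, dd), argues that the inner maximization over $V_{ij}$ for fixed $x_{ij}$ is itself a strong-duality problem because $W(V)$ is convex, and then chains the identities $\mathrm{P}=\mathrm{d}=\mathrm{dd}=\mathrm{DD}=\mathrm{D}$, the punchline being that dd and DD ``happen to have the same form.'' Your perspective substitution $y_{ij}=x_{ij}V_{ij}$ instead makes the whole problem jointly convex in one shot and reduces everything to a single application of Slater plus one Lagrangian computation; the computation you sketch does reproduce the paper's dual (maximizing the bracket over $V_{ij}$ yields $S_{ij}(\vec{\alpha})-\beta_i$, and finiteness of the supremum over $x_{ij}\ge 0$ forces exactly the constraint $\beta_i\ge S_{ij}(\vec{\alpha})$). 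What your approach buys is a self-contained and checkable argument; the paper's chain is shorter on the page but silently assumes a constraint qualification at each of its four dualization steps, so your version is, if anything, the more honest one.

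The one concrete weak point is the Slater verification. You establish strict feasibility only ``provided $B^{(k)}>0$,'' but the theorem makes no such assumption, and the paper's own instantiation violates it: the DSP-ROI and advertiser-ROI constraints in the practical-constraints table have $B^{(k)}=0$. For those particular constraints strict feasibility can still be salvaged, because the corresponding $W$ is negative for small bid prices (e.g.\ for DSP ROI in P4P mode, $W(bp)=\int_0^{bp}(-CPP_jPPI_{ij}+ROI^{(k)}x)\,p_i(x)\,dx<0$ for small $bp>0$), so a strictly feasible point exists anyway; but for a general convex $W\ge 0$ with $B^{(k)}=0$ the feasible region has empty interior, your interior point disappears, and the argument as written fails (one can then typically still show a zero duality gap, but without dual attainment, which needs a separate argument). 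You should either add the hypothesis that a strictly feasible point exists, or treat the $B^{(k)}=0$ case explicitly. The boundary convention $0\cdot W(0/0)=0$ that you flag is the standard closed perspective and is not a real obstacle.
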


\begin{proof}
Several auxiliary problems are defined in Table \ref{TableAuxiliaryProblems}.
P is the primal and could be separated into 2 nested steps.
The inner step, given $\sx$, maximizes objective with $\sV$ as variables.
The outer step, maximizes objective with $\sx$ as variables.

\begin{table}
\caption{Auxiliary Problems\label{TableAuxiliaryProblems}}
\begin{center}
\begin{tabular}{|c|c|}
\hline
Name   & Description \\
\hline
P      & outer[inner] \\
\hline
D      & dualize(P) \\
\hline
DD     & dualize(dualize(P)) \\
\hline
d      & outer[dualize(inner)] \\
\hline
dd     & outer[dualize(dualize(inner))] \\
\hline
\end{tabular}
\end{center}
\end{table}

Since $W(V)$ is convex function of $V$, inner is a strong duality problem, inner = dualize(inner), P = d.
Since dualize(inner) is a strong duality problem, dualize(inner) = dualize(dualize(inner)), d = dd.
Since D is a strong duality problem, D = DD.
dd and DD happen to have the same form, dd = DD.
As a result, P = D, strong duality of augmented MMKP holds.
\end{proof}

\subsection{Dual Based Strategy}

With strong duality satisfied, several important properties
    are claimed about the optimal solution of both primal and dual problems(i.e. $\sx^*$, $\sV^*$, $\valpha^*$ and $\sbeta^*$),
    based on which we propose the dual based strategy.

\begin{theorem}
$\sV^* = \sV(\valpha^*)$.
\end{theorem}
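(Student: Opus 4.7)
My plan is to exploit strong duality (Theorem~\ref{StrongDualityTheorem}) via a chain of weak-duality inequalities and then read off the conditions under which all of them must collapse to equalities.

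First, I would fix any primal-feasible $(\sx, \sV)$ and any dual-feasible $(\salpha, \sbeta)$ and bound the primal objective from above as follows. Starting from $\sumij \sx \sV$, I add and subtract the resource term to get
\begin{alignat}{1}
\sumij \sx \sV = \sumij \sx \sF(\sV; \valpha) + \sumk \salpha \sumij \sx \sW(\sV).
\end{alignat}
Using the primal resource constraint $\sumij \sx \sW(\sV) \le \sB$ together with $\salpha \ge 0$ bounds the second term by $\sumk \salpha \sB$. For the first term, I use $\sF(\sV; \valpha) \le \max_V \sF(V; \valpha) = \sS(\valpha)$, then apply the dual score constraint $\sbeta \ge \sS(\valpha)$ and the assignment constraint $\sumj \sx \le 1$ (together with $\sbeta \ge 0$) to bound it by $\sumi \sbeta$. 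Thus the whole chain reads
\begin{alignat}{1}
\sumij \sx \sV \ \le\ \sumij \sx \sS(\valpha) + \sumk \salpha \sB \ \le\ \sumi \sbeta + \sumk \salpha \sB.
\end{alignat}

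Next, I would specialize to the optimal primal $(\sx^*, \sV^*)$ and optimal dual $(\valpha^*, \sbeta^*)$. By strong duality the leftmost and rightmost expressions coincide, so every intermediate inequality must be tight. In particular, for every $(i,j)$ with $\sx^* > 0$ we must have
\begin{alignat}{1}
\sF(\sV^*; \valpha^*) = \sS(\valpha^*) = \max_V \sF(V; \valpha^*),
\end{alignat}
which is precisely the statement $\sV^* = \argmax_V \sF(V; \valpha^*) = \sV(\valpha^*)$.

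The one slightly delicate point, and the part I expect to be the main obstacle, is the case $\sx^* = 0$: then $\sV^*$ is not pinned down by the primal objective or any active constraint, so strictly speaking the equality $\sV^* = \sV(\valpha^*)$ need not hold automatically. I would handle this by noting that replacing any such $\sV^*$ with $\sV(\valpha^*)$ preserves primal feasibility (since the corresponding $\sx^*$ is zero and so the resource sums are unchanged) and leaves the primal objective at its optimum; hence we may assume without loss of generality that $\sV^* = \sV(\valpha^*)$ for all $(i,j)$. A secondary subtlety is the well-definedness of $\argmax_V \sF(V; \valpha^*)$, which follows from the convexity of each $\sW(V)$ assumed in Theorem~\ref{StrongDualityTheorem}, making $\sF(\cdot; \valpha^*)$ concave and hence attaining its maximum uniquely (or on a convex set from which we can pick any representative).
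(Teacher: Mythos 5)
Your proof is correct. Note that the paper itself offers no proof of this statement -- it is one of several properties that are ``claimed'' about the optimal primal and dual solutions -- so there is no official argument to compare against; but the complementary-slackness chain you construct is the standard and almost certainly intended one. Indeed, your single chain of inequalities, once collapsed by strong duality, simultaneously yields this theorem (tightness of $\sF(\sV^*;\valpha^*)\le\sS(\valpha^*)$ wherever $\sx^*>0$), the next theorem $\sx^*(\sbeta^*-\sS(\valpha^*))=0$ (tightness of $\sS(\valpha^*)\le\sbeta^*$ against $\sx^*$), and the later theorem $\sbeta^*(\sum_j\sx^*-1)=0$ (tightness of $\sum_j\sx^*\le 1$ against $\sbeta^*$), so it is the natural common source of all three. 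Your caveat about the case $\sx^*=0$ is a genuine and well-handled point: the theorem as literally stated can only hold up to the convention that $\sV^*$ is chosen as a maximizer when the assignment variable vanishes, which your without-loss-of-generality replacement makes precise; the paper silently assumes this.
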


\begin{theorem}
$\sx^*(\sbeta^* - \sS(\valpha^*)) = 0$.
\end{theorem}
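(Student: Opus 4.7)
The statement is the complementary slackness condition tying the primal variable $\sx$ to the dual constraint $\sbeta \ge \sS(\valpha)$, so the plan is to derive it from strong duality (Theorem \ref{StrongDualityTheorem}) via the usual squeeze argument on the Lagrangian, followed by a one-line appeal to the preceding theorem $\sV^* = \sV(\valpha^*)$.

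\smallskip

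\textbf{Step 1: Set up the equality.} By Theorem \ref{StrongDualityTheorem} (assuming $\sW(V)$ convex in $V$), the primal and dual optima coincide:
\begin{align*}
\sumij \sx^* \sV^* \;=\; \sumk \salpha^* \sB + \sumi \sbeta^*.
\end{align*}

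\textbf{Step 2: Chain of inequalities.} First use the primal resource constraint $\sumij \sx^* \sW(\sV^*) \le \sB$ together with $\salpha^* \ge 0$ to bound $\sumk \salpha^* \sB$ from below by $\sumij \sx^* \sumk \salpha^* \sW(\sV^*)$. Combining with the previous display gives
\begin{align*}
\sumij \sx^* \sF(\sV^*; \valpha^*) \;\ge\; \sumi \sbeta^*.
\end{align*}
Next, use the dual constraint $\sbeta^* \ge \sS(\valpha^*) \ge \sF(\sV^*; \valpha^*)$ (the latter by definition of $\sS$) together with $\sx^* \ge 0$ and the primal assignment constraint $\sumj \sx^* \le 1$ and $\sbeta^* \ge 0$ to get
\begin{align*}
\sumij \sx^* \sF(\sV^*; \valpha^*) \;\le\; \sumij \sx^* \sbeta^* \;=\; \sumi \sbeta^* \sumj \sx^* \;\le\; \sumi \sbeta^*.
\end{align*}

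\textbf{Step 3: Collapse to termwise equality.} The two chains pinch $\sumij \sx^* \sF(\sV^*; \valpha^*)$ between the same quantity from above and below, so every inequality is tight. In particular $\sumij \sx^* \bigl(\sbeta^* - \sF(\sV^*; \valpha^*)\bigr) = 0$, and since each summand is nonnegative (by $\sx^* \ge 0$ and $\sbeta^* \ge \sF(\sV^*; \valpha^*)$) each individually vanishes:
\begin{align*}
\sx^* \bigl(\sbeta^* - \sF(\sV^*; \valpha^*)\bigr) \;=\; 0 \qquad \forall i,j.
\end{align*}
Finally, by the preceding theorem $\sV^* = \sV(\valpha^*) = \argmax_V \sF(V;\valpha^*)$, so $\sF(\sV^*; \valpha^*) = \sS(\valpha^*)$, yielding the claim.

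\smallskip

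The only real obstacle is Step 2 — assembling the correct pair of inequality chains so that they squeeze to a termwise complementary slackness condition; once the Lagrangian structure is written out carefully, the rest is bookkeeping. No further assumptions on $p_i(x)$ or on the utility family $\uff$ are needed beyond the convexity hypothesis already used for strong duality.
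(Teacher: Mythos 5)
Your proof is correct. The paper states this theorem without proof (only the strong duality result P $=$ D is proved in the appendix), and your squeeze argument is exactly the standard complementary-slackness derivation that the paper's Lagrangian machinery implicitly relies on: primal feasibility with $\salpha^* \ge 0$ gives the lower bound, dual feasibility with $\sx^* \ge 0$ and $\sumj \sx^* \le 1$ gives the upper bound, and tightness forces each nonnegative summand to vanish. One small remark: the final appeal to $\sV^* = \sV(\valpha^*)$ is not even needed, since $0 \le \sx^*\bigl(\sbeta^* - \sS(\valpha^*)\bigr) \le \sx^*\bigl(\sbeta^* - \sF(\sV^*;\valpha^*)\bigr) = 0$ follows directly from $\sS(\valpha^*) \ge \sF(\sV^*;\valpha^*)$ and dual feasibility.
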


\begin{corollary}
If $\sS(\valpha^*) < 0$, we have $\sx^* = 0$.
\end{corollary}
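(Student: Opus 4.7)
The plan is to derive the corollary directly from the preceding theorem $\sx^*(\sbeta^* - \sS(\valpha^*)) = 0$ combined with dual feasibility of $\sbeta^*$. No new machinery is needed; the argument is a one-line sign chase, so the proof should be essentially a remark.

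First I would recall that the dual problem explicitly imposes $\sbeta \ge 0$ for every $i$, so at any optimal dual solution we have $\sbeta^* \ge 0$. The hypothesis of the corollary gives $\sS(\valpha^*) < 0$. Chaining these two inequalities yields
\begin{equation*}
\sbeta^* - \sS(\valpha^*) \ge 0 - \sS(\valpha^*) > 0,
\end{equation*}
so the factor $\sbeta^* - \sS(\valpha^*)$ is strictly positive.

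Next I would invoke the previous theorem, which (under the strong duality established in Theorem \ref{StrongDualityTheorem}) asserts $\sx^*(\sbeta^* - \sS(\valpha^*)) = 0$. Since the second factor is strictly positive, the only way the product can vanish is $\sx^* = 0$, which is exactly the claim.

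Since the argument is essentially a two-step sign check against one previously proved identity and one dual-feasibility inequality, there is no real obstacle here. The only thing worth emphasizing in the write-up is why $\sbeta^* \ge 0$ holds, i.e.\ that it is a defining constraint of the dual rather than a consequence that needs separate justification; beyond that, the corollary is an immediate specialization of the complementary-slackness relation and can be stated in a couple of lines.
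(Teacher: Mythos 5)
Your proposal is correct and follows exactly the paper's own argument: dual feasibility gives $\sbeta^* \ge 0$, the hypothesis gives $\sS(\valpha^*) < 0$, so $\sbeta^* - \sS(\valpha^*) > 0$, and the complementary-slackness identity $\sx^*(\sbeta^* - \sS(\valpha^*)) = 0$ forces $\sx^* = 0$. No differences worth noting.
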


\begin{proof}
Since $\sS(\valpha^*) < 0$ and $\sbeta^* \ge 0$, we have $\sbeta^* > \sS(\valpha^*)$.
Now that $\sbeta^* - \sS(\valpha^*) > 0$ and $\sx^* \ge 0$,
    taking above theorem into consideration, $\sx^*$ must be 0,
    that is $Item_i$ should not be distributed to $User_j$.
\end{proof}

\begin{corollary}
If $S_{ij_1}(\valpha^*) < S_{ij_2}(\valpha^*)$, we have $x_{ij_1}^* = 0$.
\end{corollary}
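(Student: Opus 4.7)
The plan is to deduce this corollary directly from the complementary slackness theorem $\sx^*(\sbeta^* - \sS(\valpha^*)) = 0$ proved just above, by exhibiting a strictly positive slack for the pair $(i, j_1)$. First, I would invoke dual feasibility at the optimum: the constraint $\scoreconstraint$ must hold for every $(i,j)$, so in particular at $(i, j_2)$ it gives $\beta_i^* \ge S_{ij_2}(\valpha^*)$. Chaining this with the hypothesis $S_{ij_1}(\valpha^*) < S_{ij_2}(\valpha^*)$ yields $\beta_i^* > S_{ij_1}(\valpha^*)$, so the factor $\beta_i^* - S_{ij_1}(\valpha^*)$ is strictly positive.

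Having secured a strict gap, I would then specialize the complementary slackness identity to $(i, j_1)$: since the second factor is strictly positive and the product vanishes, the first factor $x_{ij_1}^*$ must be zero, which is the claim. Combined with the primal feasibility $x_{ij_1}^* \ge 0$, no sign-checking subtlety arises.

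This mirrors the argument of the preceding corollary, which used the crude dual feasibility bound $\beta_i^* \ge 0$ in place of the sharper bound $\beta_i^* \ge S_{ij_2}(\valpha^*)$ supplied here by feasibility at the competing index. I do not anticipate any genuine obstacle; the only care needed is to invoke the dual constraint at the competitor index $j_2$ rather than at $j_1$, since that is precisely what upgrades the weak inequality $\beta_i^* \ge S_{ij_1}(\valpha^*)$ (which alone would be useless) to the strict inequality that forces the allocation to vanish.
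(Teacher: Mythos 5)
Your proposal is correct and follows exactly the paper's own argument: dual feasibility at the competing index $j_2$ gives $\beta_i^* \ge S_{ij_2}(\vec{\alpha}^*) > S_{ij_1}(\vec{\alpha}^*)$, and complementary slackness at $(i,j_1)$ then forces $x_{ij_1}^* = 0$. No differences worth noting.
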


\begin{proof}
Similarly, since $S_{ij_1}(\valpha^*) < S_{ij_2}(\valpha^*)$ and $\sbeta^* \ge S_{ij_2}(\valpha^*)$,
    we have $\sbeta^* > S_{ij_1}(\valpha^*)$.
Now that $\sbeta^* - S_{ij_1}(\valpha^*) > 0$ and $x_{ij_1}^* \ge 0$, 
    taking above theorem into consideration, $x_{ij_1}^*$ must be 0,
    that is $Item_i$ should not be distributed to dominated $User_{j_1}$.
\end{proof}

\begin{theorem}
$\sbeta^*(\sum\limits_j \sx^* - 1) = 0$.
\end{theorem}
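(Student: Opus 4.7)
The plan is to derive this as complementary slackness from strong duality (Theorem \ref{StrongDualityTheorem}), following essentially the same template as the preceding theorem on $\sx^*(\sbeta^* - \sS(\valpha^*))$. The idea is to squeeze an inequality chain between the primal and dual objective values so tightly that every slack is forced to zero, and then pick out the particular slack associated with the assignment constraint.

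Concretely, I would first write down dual feasibility $\sbeta^* \ge \sS(\valpha^*)$ and note that, by definition of $\sS$, this means $\sbeta^* \ge \sV^* - \sumk \salpha^* \sW(\sV^*)$ for every $j$. Multiplying by $\sx^* \ge 0$ and summing over $i,j$ gives
\begin{equation*}
\sumi \sbeta^* \sumj \sx^* \;\ge\; \sumij \sx^* \sV^* - \sumk \salpha^* \sumij \sx^* \sW(\sV^*).
\end{equation*}
Next I would invoke the primal resource feasibility $\sumij \sx^* \sW(\sV^*) \le \sB$ together with $\salpha^* \ge 0$ to upper bound the negative term, producing
\begin{equation*}
\sumi \sbeta^* \sumj \sx^* \;\ge\; \sumij \sx^* \sV^* - \sumk \salpha^* \sB.
\end{equation*}
Strong duality then replaces the right hand side by $\sumi \sbeta^*$, yielding $\sumi \sbeta^*(\sumj \sx^* - 1) \ge 0$.

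For the reverse direction, I would use the primal assignment feasibility $\sumj \sx^* \le 1$ and dual nonnegativity $\sbeta^* \ge 0$ to conclude each summand $\sbeta^*(\sumj \sx^* - 1)$ is $\le 0$. Combining both inequalities forces every term in the sum to be exactly zero, which is the claim. The step I expect to require the most care is keeping the signs consistent when the resource inequality is folded in, since it is the simultaneous tightness of both the resource inequality and the dual score inequality that makes the telescoping work; in particular, this argument silently reconfirms the complementary slackness statements of the preceding two theorems, and I would briefly note this consistency rather than reprove them.
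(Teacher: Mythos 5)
Your proof is correct: it is the standard complementary slackness argument, squeezing the primal--dual inequality chain under strong duality (Theorem \ref{StrongDualityTheorem}) and combining it with primal feasibility $\sumj \sx^* \le 1$ and dual nonnegativity $\sbeta^* \ge 0$ to force each term to vanish. The paper itself states this theorem without proof, merely asserting it as a consequence of strong duality, so your argument fills in exactly the derivation the authors had in mind.
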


\begin{corollary}
If $\exists j$ that $\sS(\valpha^*) > 0$, we have $\sum\limits_j \sx^* = 1$.
\end{corollary}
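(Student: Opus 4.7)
The plan is to derive this corollary as a short logical consequence of the complementary-slackness theorem stated immediately above, combined with dual feasibility. Specifically, the proof has the same shape as the two earlier corollaries in this subsection: identify a factor in a complementary-slackness product that is strictly positive, and conclude that the other factor must vanish.

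First I would fix the index $i$ and invoke the hypothesis to obtain some $j$ with $\sS(\valpha^*) > 0$. Because $\sbeta^*$ is part of a feasible solution to the dual, the constraint $\scoreconstraint$ applies, so $\sbeta^* \ge \sS(\valpha^*) > 0$, meaning $\sbeta^*$ is strictly positive (not merely nonnegative). Next I would apply the theorem $\sbeta^*(\sum_j \sx^* - 1) = 0$. Since the first factor is strictly positive, the second factor must equal zero, giving $\sum_j \sx^* = 1$.

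I expect no real obstacle: this is a one-line consequence of complementary slackness, and the argument is essentially parallel to the two preceding corollaries (which isolated a strictly positive $\sbeta^* - \sS(\valpha^*)$ factor and concluded $\sx^* = 0$). The only thing worth being careful about is quoting the correct dual inequality — namely the score constraint from the dual formulation, which must hold for \emph{every} $j$ and in particular for the witness $j$ provided by the hypothesis. Once that is in hand, strict positivity of $\sbeta^*$ follows, and the complementary slackness product forces the assignment sum to saturate to $1$, with the interpretation that $Item_i$ is fully distributed.
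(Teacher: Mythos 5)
Your proposal is correct and follows exactly the paper's own argument: use dual feasibility ($\sbeta \ge \sS(\valpha)$ at the witness $j$) to get $\sbeta^* > 0$, then apply the complementary-slackness identity $\sbeta^*(\sum_j \sx^* - 1) = 0$ to force the sum to equal $1$. No differences worth noting.
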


\begin{proof}
Since $\sS(\valpha^*) > 0$ and $\sbeta^* \ge \sS(\valpha^*)$, we have $\sbeta^* > 0$.
Now that $\sbeta^* > 0$ and $\sum\limits_j \sx^* - 1 \le 0$,
    taking above theorem into consideration, $\sum\limits_j \sx^*$ must be 1,
    which means $Item_i$ should not be discarded.
\end{proof}

In summary, for each $Item_i$, every $User_j$ should propose its own best score $\sS^*$ achieved by $\sV^*$.
$Item_i$ should be awarded to the dominating $User_{j^*}$ if its best score $S_{ij^*}^*$ is positive and discarded if that is negative.
Theoretically speaking, while most of which are determined by above corollaries, behaviors remain undefined in two special cases.
First, there might be multiple dominating users with the same best score.
Second, the best score of dominating user might be exactly zero.
In practice, however, both cases are probably rare due to the high resolution of items and users, and prone to cause relatively limited damage.
Ties could be broken by random or heuristics.

\begin{algorithm}
\caption{Dual Based Strategy for Augmented MMKP}

\For{$Item_i \in \mathscr{I}$}
{
  \For{$User_j \in \mathscr{U}$}
  {
    $\sF(V; \valpha^*) = V - \sumk \salpha \sW(V)$

    $\sV^* = \sV(\valpha^*) = \argmax\limits_V \sF(V; \valpha^*)$

    $\sS^* = \sS(\valpha^*) = \max\limits_V \sF(V; \valpha^*)$
  }
  $j^* = \argmax\limits_j \sS^*$
  
  \If{$S_{ij^*}^* \ge 0$} { bid with ($User_{j^*}$, $V_{ij^*}^*$) }
}
\end{algorithm}

\subsection{Numeric Optimization}

Note that, during the execution of the dual base strategy,
    only the $\valpha^*$ is mandatory while the others(i.e. $\sx^*$, $\sV^*$ and $\sbeta^*$) could be recovered with $\valpha^*$,
    which makes our strategy storage efficient.
Next, we propose the numeric method to solve $\valpha^*$.

\begin{definition}
$\sbeta(\valpha) = \max \{ 0, \sS(\valpha) \forall j \}$
\end{definition}

\begin{definition}
$\sG(\valpha) = \sum\limits_k \frac{\salpha \sB}{N} + \sbeta(\valpha)$
\end{definition}

By fixing $\valpha$ in the dual problem, $\sbeta^*$ could be calculated as $\sbeta^* = \sbeta(\valpha)$.
Then the dual problem could be rewritten as 
\begin{alignat}{2}
\min\limits_{\valpha \ge 0} & \sum\limits_i \sG(\valpha)
\end{alignat}
    and solved by SGD(Stochastic Gradient Descent).
Due to the convexity of dual problem, it must converge to the global optimal $\valpha^*$.

\section{Solution} \label{Solution}

\subsection{Dual}

We define corresponding basic functions and show the dual of DSP problem based on them.

\begin{definition}
$\sF(bp; \valpha) = \sV(bp) - \sumk \salpha \sW(bp)$
\end{definition}

\begin{definition}
$\sbp(\valpha) = \argmax\limits_{bp} \sF(bp; \valpha)$
\end{definition}

\begin{definition}
$\sS(\valpha) = \max\limits_{bp} \sF(bp; \valpha)$
\end{definition}

\begin{alignat}{2}
    \min\limits_{\salpha, \sbeta} \quad & \sumk \salpha \sB + \sumi \sbeta \quad & {} \\
    \mbox{s.t.} \quad                   & \scoreconstraint \quad                 & \forall i,j \\
    \quad                               & \salpha \ge 0 \quad                    & \forall k \\
    \quad                               & \sbeta \ge 0 \quad                     & \forall i
\end{alignat}

Note that, in DSP problem, our sub-choice is indicated by $bp$ rather than $V$.
Since $\sF$ is the linear combination of $\sV(bp)$ and $\sW(bp)$ from $\uff$ with shared $p_i(x)$,
    it must belong to $\uff$ too with its $\pprob_{\sF}$ and $\pcost_{\sF}$ as follows.
\begin{alignat}{2}
\pprob_{\sF}= & \pprob_{\sV} - \sum\limits_k \salpha \pprob_{\sW} \\
\pcost_{\sF}= & \pcost_{\sV} - \sum\limits_k \salpha \pcost_{\sW}
\end{alignat}
In practice, each ad is usually subjected to very limited number of constraints,
    which makes the calculation of $\pprob_{\sF}$ and $\pcost_{\sF}$ light-weighted.

\subsection{Strong Duality}

Due to the nice property of $\uff$, it's easy to check that,
    as to practical objectives and constraints(Section \ref{ObjectivesAndConstraints}),
    $W(bp)$ is indeed convex function of $V(bp)$,
    which immediately justifies the strong duality of DSP problem.

\begin{theorem} \label{DSPStrongDualityTheorem}
Strong duality of DSP problem holds.
\end{theorem}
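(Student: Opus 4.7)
The plan is to derive this from Theorem \ref{StrongDualityTheorem} (the augmented MMKP strong duality result), by recasting the DSP problem as an augmented MMKP instance and then verifying the convexity hypothesis on $W(V)$.

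For the reduction, I would use the DSP sub-choice $bp$ as a reparameterization of the augmented MMKP sub-choice $V$. Theorem \ref{DerivationTheorem} gives $\sV'(bp) = (\pprob_V + \pcost_V bp)p(bp) \ge 0$, so $\sV$ is monotone non-decreasing in $bp$ on its active domain, and each constraint function $\sW$ descends to a well-defined function of $\sV$. This exhibits the DSP problem as a special case of the augmented MMKP, whose sub-choice index is now $V$.

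For convexity, Theorem \ref{SecondDerivationTheorem} reduces the question to a sign test: $d^2 W / dV^2$ shares the sign of the numerator $\pprob_V \pcost_W - \pcost_V \pprob_W$, since the denominator $(\pprob_V + \pcost_V bp)^3 p(bp)$ is positive on the active domain. I would then enumerate the (objective, constraint) pairs allowed by Tables \ref{TableObjectives} and \ref{TableConstraints} (six per payment mode). Many collapse to the linear case because both $\pcost_V$ and $\pcost_W$ vanish (for example, P4P Revenue or Performance paired with Budget or Advertiser ROI). The non-trivial pairs --- for instance P4P Revenue against DSP ROI producing numerator $\sCPI \sROI$, or P4U Performance against Advertiser ROI producing $\sPPI \sROI (1+\sCR)$ --- all give non-negative numerators under the natural positivity of CPP, PPI, CR, and ROI.

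The step I expect to need the most care is the P4U DSP ROI constraint paired with the Performance objective, where $\pcost_W = \sROI - (1+\sCR)$ can change sign. Here I would argue ad-by-ad: on any ad with $\sROI \le 1+\sCR$ the constraint contribution is automatically satisfied (the per-ad revenue equals $(1+\sCR)\cdot\text{Cost}$ by construction, so $W \le 0$ holds for free) and may be dropped, while on ads with $\sROI > 1+\sCR$ the numerator $\sPPI(\sROI - (1+\sCR))$ is positive and convexity survives. Once every admissible pair is handled in this way, $W(V)$ is convex for each active constraint, Theorem \ref{StrongDualityTheorem} applies verbatim, and strong duality of the DSP problem follows.
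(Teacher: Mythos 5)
Your proof follows the same skeleton as the paper's: Theorem \ref{DerivationTheorem} to make $\sW$ a well-defined function of $\sV$ along the bid-price parameterization, Theorem \ref{SecondDerivationTheorem} to reduce convexity of $W(V)$ to the sign of $\pprob_V\pcost_W-\pcost_V\pprob_W$ over a positive denominator, and Theorem \ref{StrongDualityTheorem} to conclude. The paper simply asserts $\frac{d^2W}{dV^2}\ge0$ for all practical pairs; your explicit enumeration is more careful, and your sign computations for the unproblematic pairs are correct.

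The gap is in the one case you rightly single out: the P4U Performance objective against the DSP ROI constraint. In standard form that constraint reads $\sumij \sx\,(\sROI-(1+\sCR))\,\sCost \le 0$, and for an ad with $\sROI\le 1+\sCR$ the term $\sW$ is indeed always nonpositive --- but it is also concave and decreasing in $\sV$, and ``dropping'' it is not an equivalence-preserving operation. Removing a nonpositive term from the left-hand side of a $\le 0$ constraint \emph{shrinks} the feasible region: the remaining terms must now be nonpositive on their own, whereas originally they only needed to be dominated by the slack those dropped terms supplied. So strong duality of the modified problem says nothing about the original. The per-ad dismissal is only valid when \emph{every} ad in the constraint's scope satisfies $\sROI\le 1+\sCR$, in which case the entire constraint is vacuous and may be deleted wholesale. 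In the mixed case --- high-commission ads subsidizing the ROI of low-commission ones --- the constraint genuinely couples a concave $\sW(\sV)$ into the sum, the hypothesis of Theorem \ref{StrongDualityTheorem} fails as stated, and you need either a separate argument or an explicit standing assumption such as $\sROI\ge 1+\sCR$ for all in-scope ads. To be fair, the paper's own proof asserts $\frac{d^2W}{dV^2}\ge0$ unconditionally and glosses over exactly this case, so your analysis exposes a real loose end rather than merely creating one.
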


\begin{proof}
According to Theorem \ref{DerivationTheorem}, $V'(bp)>0$ and $W(bp)$ must be function of $V(bp)$.
According to Theorem \ref{SecondDerivationTheorem}, $\frac{d^2W}{dV^2}\ge0$ and $W(bp)$ must be convex function of $V(bp)$.
As a result, according to Theorem \ref{StrongDualityTheorem}, strong duality of DSP problem holds.
\end{proof}

\subsection{Dual Based Strategy} \label{DSPDualBasedStrategy}

With strong duality satisfied, the dual based strategy developed for augmented MMKP is also applicable to DSP problem.
Generally speaking, $\sbp^*$ could be determined without $p_i(x)$ according to Theorem \ref{ArgMaxTheorem}.
In certain applications, $\pcost_{\sF}$ is the same for given $i$ and all $j$,
    then $j^*$ could also be determined independent of $p_i(x)$ according to Theorem \ref{ComparisonTheorem}.
By disposing of $p_i(x)$ completely from deciding process, it not only simplifies the computation,
    but also encourages $p_i(x)$ free training process.

\begin{algorithm}
\caption{Dual Based Strategy for DSP Problem}

\For{$Impression_i \in \mathscr{I}$}
{
  \For{$Ad_j \in \mathscr{A}$}
  {
    $\pprob_{\sF} = \pprob_{\sV} - \sum\limits_k \salpha \pprob_{\sW}$

    $\pcost_{\sF} = \pcost_{\sV} - \sum\limits_k \salpha \pcost_{\sW}$

    $\sF(bp; \valpha^*) = \int_0^{bp} (\pprob_{\sF}+\pcost_{\sF}x)p_i(x)dx$

    $\sbp^* = \sbp(\valpha^*) = \argmax\limits_{bp} \sF(bp; \valpha^*) = -\pprob_{\sF} / \pcost_{\sF}$

    $\sS^* = \sS(\valpha^*) = \max\limits_{bp} \sF(bp; \valpha^*)$
  }
  $j^* = \argmax\limits_j \sS^*$
  
  \If{$S_{ij^*}^* \ge 0$} { bid with ($Ad_{j^*}$, $bp_{ij^*}^*$) }
}
\end{algorithm}

\subsection{Numeric Optimization} \label{DSPNumericOptimization}

The numeric method developed for augmented MMKP is also applicable to DSP problem.
It's easy to prove that $\frac{d\sbeta(\valpha)}{d\salpha}$ must be
    either $-\sW(\sbp(\valpha))$ if the best score of the dominating $Ad_j$ is positive or $0$ otherwise.
This optimization method, though generally applicable, requires explicit modeling of $p_i(x)$.

Through executing our strategy in production environment, the randomized version of $\sW(\sbp(\valpha))$ is revealed
    and the gradients could be approximated with these feedbacks.
This optimization method is $p_i(x)$ free and much easier to implement.

\section{Simulation} \label{Simulation}

\subsection{Methodology}

To eliminate the uncertainty, our strategy is verified in P4P and P4U modes through simulation.
Due to the limited space, we focus on the P4P mode in the rest of Section \ref{Simulation}.

There are two simulation cases, i.e. one for revenue maximization and the other for performance maximization.
Two mocked ads $Ad_1$ and $Ad_2$ are created with $CPP_1=1$ and $CPP_2=2$.
Four mocked constraints are listed in Table \ref{TableConstraints}.
Budget of $Ad_1$ and $Ad_2$ are 20 and 10 respectively.
The global DSP ROI lower bound is 2, while the global advertiser ROI lower bound is 0.5.
As suggested by \cite{YingCui2011}, $p(x)$ is assumed to be log normal distribution $p(x;\mu,\sigma)$
    with mean $\mu$ and standard deviation $\sigma$ as parameters.
To mock the impressions, 200 tuples <$\mu_i$, $\sigma_i$, $PPI_{i1}$, $PPI_{i2}$> are drawn randomly.

\begin{table}
\caption{Mocked Constraints\label{TableConstraints}}
\begin{center}
\begin{tabular}{|c|c|c|c|}
\hline
k   & Type            & Parameter           & Scope   \\
\hline
1   & Budget          & $Budget^{(k)} = 20$ & $\{Ad_1\}$        \\
\hline
2   & Budget          & $Budget^{(k)} = 10$ & $\{Ad_2\}$        \\
\hline
3   & DSP ROI         & $ROI^{(k)} = 2$     & $\{Ad_1, Ad_2\}$  \\
\hline
4   & Advertiser ROI  & $ROI^{(k)} = 0.5$   & $\{Ad_1, Ad_2\}$  \\
\hline
\end{tabular}
\end{center}
\end{table}

Once the configurations are ready, $\valpha^*$ are solved by SGD(Section \ref{DSPNumericOptimization}).
After that, the dual based strategy is applied on the same cases and the consequent statistics are collected.

\subsection{Results and Analysis}

The statistics and $\valpha^*$ are listed in Table \ref{TableStatisticsAndAlpha}.
In both cases, all resources have non-negative surplus and no constraint is violated.
In addition, the gap between primal and dual objective values is negligible(Theorem \ref{DSPStrongDualityTheorem}).
As mentioned earlier, the $\alpha^*$ serves as so called opportunity price of the resource.
Intuitively speaking, waste of resource with positive surplus shouldn't lead to any opportunity cost.
As a result, the corresponding $\alpha^*$ tends to be 0.

\begin{table*}
\caption{Statistics and $\alpha^*$\label{TableStatisticsAndAlpha}}
\begin{center}
\begin{tabular}{|c|c|c|c|c|c|c|c|c|}
\hline
\mr{2}{Case}      & \mr{2}{Description}                          &\mr{2}{Primal} &\mr{2}{Dual}
                  & \mc{4}{Resource}                                        & \mr{2}{$\alpha^*$} \\
\cline{5-8}
                  &&&& $k$   & Limit       & Consumption     & Surplus      & \\
\hline
\mr{4}{$Case_1$}  & \mr{4}{\makecell{Revenue\\Maximization}}     & \mr{4}{2.164} &\mr{4}{2.164} 
                     & 1     & 20.000      & 0.865           & 19.135       & 0.000 \\
\cline{5-9}
                  &&&& 2     & 10.000      & 1.299           & 8.701        & 0.000 \\
\cline{5-9}
                  &&&& 3     & 0.000       & 0.000           & 0.000        & 4.749 \\
\cline{5-9}
                  &&&& 4     & 0.000       & -0.433          & 0.433        & 0.000 \\
\hline
\mr{4}{$Case_2$}  & \mr{4}{\makecell{Performance\\Maximization}} & \mr{4}{1.590} & \mr{4}{1.590}
                     & 1     & 20.000      & 1.100           & 18.900       & 0.000 \\
\cline{5-9}
                  &&&& 2     & 10.000      & 0.979           & 9.021        & 0.000 \\
\cline{5-9}
                  &&&& 3     & 0.000       & 2.747           & -2.747       & 3.654 \\
\cline{5-9}
                  &&&& 4     & 0.000       & -0.550          & 0.550        & 0.000 \\
\hline
\end{tabular}
\end{center}
\end{table*}

\section{Application} \label{Application}

\subsection{Scenario}

We also deploy our strategy in the DSP platform of Alibaba.
In our application, advertisers set budgets and pay for clicks, while DSP is willing to maximize revenue under daily global DSP ROI constraint.
There are so many ads in our inventory that it's impossible to go through each ad before auction deadline.
Although these budgets are quite large totally, they are relatively small on average.

With well calibrated CPP and PPI predictors, the problem could be transformed equivalently into one in P4P mode.
And to meet the latency requirement, the whole deciding process is decomposed into two stages with so called logical ad.

Logical ad should be seen as proxy of physical ads and binded with specific ad retrieval algorithm.
In the first stage, DSP is supposed to make decisions among just a few logical ads and respond in time.
In the second stage, once the chosen logical ad wins the auction, physical ad is lazily retrieved with corresponding algorithm.

\begin{figure}[!h]
\centering
\includegraphics[width=1.0\linewidth]{./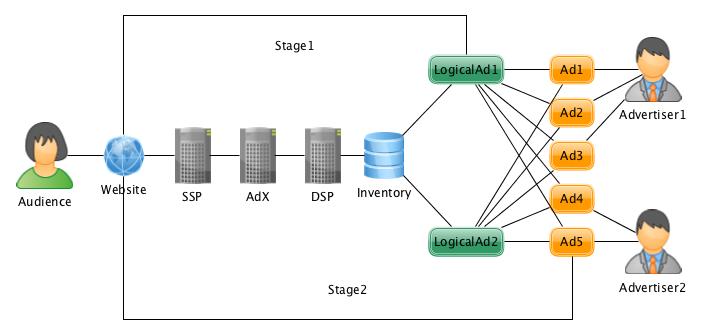}
\caption{Real Time Bidding with Logical Ad}
\end{figure}

Our logical ads are actually based on 4 heterogeneous ad retrieval algorithms whose details are beyond the scope of this document.
These algorithms are sorted by their historical performance in descending orders and 4 logical ads are constructed correspondingly.

In summary, our problem could be approximately modeled as, given 4 logical ads with literally unlimited budget,
    maximizing revenue under daily global DSP ROI constraint in P4P mode.
Since there is only one resource constraint, superscript $k$ is omitted and
    ROI is short for global DSP ROI in the rest of Section \ref{Application}.

According to our theory, we have $\pprob_{\sF}=(1+\alpha)\sCPI$ and $\pcost_{\sF}=-\alpha{}ROI$.
Since $\pcost_{\sF}$ is always $-\alpha{}ROI$, no $p_i(x)$ is required in deciding process as discussed in Section \ref{DSPDualBasedStrategy}.
To take full advantage of that, we adopt a simplified version of the $p_i(x)$ free optimization method
    suggested in Section \ref{DSPNumericOptimization}, i.e. $\dbiter$.

\subsection{Experiment Groups}

We compare our strategy with a variation of linear bidding strategy.
In \cite{M6D}, it's suggested that $\sbp=\frac{ActualCTR_{ij}}{CTR}Bid$ with $Bid$ set by operation team.
However, unlike $ActualCTR_{ij}$ which is independent of $\sbp$, $ActualROI_{ij}$ indeed varies with it.
As a result, we iteratively update $\sbp$ with $\liniter$.

We also apply optimal RTB theory to our application for comparison.
According to \cite{WeinanZhang2014}, we model the win probability as $w(bp;c)=\frac{bp}{c+bp}$ and bid with $\sbp=\ortbbp$,
    in which $c$ is fitted with method proposed by \cite{Wu2015} and $\lambda$ is iteratively tuned with $\ortbiter$.

Four experiment groups are shown in Table \ref{TableExperimentGroups}.
The first three groups are designed to compare different strategies with single logical ad,
    while the last group is used to test our strategy with multiple logical ads.

\begin{table*}
\caption{Experiment Groups\label{TableExperimentGroups}}
\begin{center}
\begin{tabular}{|c|c|c|c|c|c|}
\hline
Group    & Inventory                           & Strategy           & $\sbp$          & Iteration         & Period\\
\hline
$LIN$    & $\{LogicalAd_1\}$                   & Linear             & $\sbp$          & $\liniter$        & 24 hours \\
\hline
$ORTB$   & $\{LogicalAd_1\}$                   & Optimal RTB        & $\ortbbp$       & $\ortbiter$       & \mr{3}{10 minutes} \\
\cline{1-5}
$DB_{s}$ & $\{LogicalAd_1\}$                   & \mr{2}{Dual Based} & \mr{2}{$\dbbp$} & \mr{2}{$\dbiter$} & \\
\cline{1-2}
$DB_{m}$ & $\{LogicalAd_j|j \in \{1,2,3,4\}\}$ &                    &                 &                   & \\
\hline
\end{tabular}
\end{center}
\end{table*}

To eliminate potential bias, the experiment lasts for a whole ordinary week.
Bidding opportunities are distributed to each group randomly with equal probability.
For fairness, the same CPP and PPI predictors are shared by all groups.
The lower bound of daily ROI is set to 3.5.

Strategy parameters(i.e. $\sbp$, $\lambda$ and $\alpha$) are randomly initialized and
    periodically adjusted with actual ROI since last update.
The period is set to 24 hours for the $LIN$ group due to the data sparseness
    and 10 minutes as to the others for robustness and faster convergence.
Note that the more frequent update introduces inexplicitly a 10 minutes ROI constraint
    which is stricter than the daily one and might degenerate the theoretical optimal.

\subsection{Results and Analysis}

For each group, the daily statistics of four metrics are plotted in Figure \ref{Result},
    namely revenue, actual ROI, number of winning impressions and revenue per winning impression.

\begin{figure}[!h]
\centering
\includegraphics[width=1.0\linewidth]{./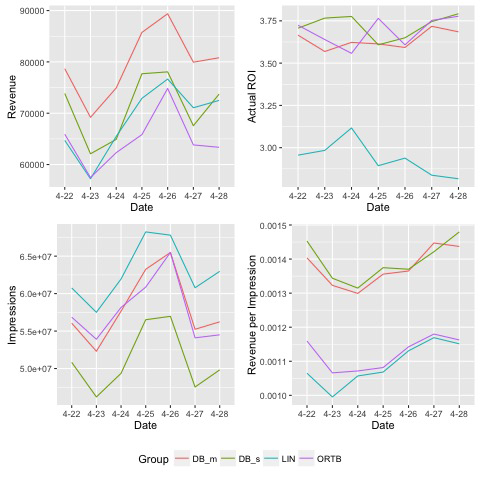}
\caption{Experiment Results\label{Result}}
\end{figure}

The $LIN$, though with theoretical optimal intact, tends to earn less revenue than the others in practice.
In addition, it usually violates the daily ROI constraint seriously, so it's an inferior strategy.

Compared with the $DB_s$ who claims a linear relationship between bid price and expected revenue,
    the $ORTB$, derived from first price auction assumption, suggests a non-linear one.
It is biased towards the impressions with low expected revenue and against those with high expected revenue,
    which leads to more impressions and lower averaged quality.
While the daily ROI constraint is satisfied by both strategies,
    the $DB_s$ earns more revenue than the $ORTB$. As a result, the $DB_s$ is superior theoretically and practically.

The $DB_m$, as an ensemble of four ad retrieval algorithms,
    achieves the most revenue without violation of the daily ROI constraint and becomes the best strategy.

\section{Conclusions and Future Works}

In this document, we propose a dual based DSP bidding strategy
    derived from second price auction assumption according to convex optimization theory.
Our strategy is verified through simulation and outperforms state-of-the-art strategies in real application.
It's a theoretically solid and practically effective strategy with simple implementation and various applications.

Three problems remain unsolved and deserve further study.
First, is there a better way to solve $\valpha^*$ of large scale in dynamic environment?
On the one hand, in a typical DSP, there will be millions of constraints shared by similar number of ads.
Each of the constraints deserves a $\alpha$, which makes the vector $\valpha$ very large.
On the other hand, billions of impressions are broadcast by AdX every day and bid by hundreds of DSPs simultaneously.
The bidding strategies are interactively adjusted by DSPs and the inventories are frequently updated by advertisers,
    which makes the bidding landscape unstable.
Both properties make the $\valpha^*$ hard to solve.

Second, how to construct and index logical ads automatically in massive ads applications, balancing latency and performance?
It's obvious that both deciding and training processes share the same ad evaluation and maximum determination style,
    which makes their computational complexities linearly related with the number of candidate logical ads.
At one extreme, each ad is represented by exactly one logical ad, and the consequent latency is unacceptable.
At the other extreme, all ads are represented by the only logical ad, while the performance might be seriously degenerated.
A proper compromise combined with efficient indexing tricks will accelerate both processes by orders of magnitude.

Third, how to optimally break ties when they are common and critical?
Take an imaginary scenario for example.
DSP is willing to maximize its revenue in P4P mode.
There are two identical ads with the same CPP and PPI,
    but they are targeted to overlapped sets of impressions and subjected to different budget constraints.
In this circumstance, resolution of impressions and ads is extremely low and ties are very prevalent.
To tackle the tie breaking problem, we might try randomized soft-max instead of hard-max during ad selection.
However, the theoretical soundness and practical effectiveness of this tie breaking strategy are to be verified.

\appendix

\section{Strong Duality Proof}

Here we give the detailed proof of the strong duality. We first prove that P $\le$ D by dualizing P.

\begin{flalign*}
    P = & - \min\limits_{\substack{\sx,\sV \\ \ammkpresourceconstraint \\ \assignmentconstraint \\ \sx \ge 0 }} \{ - \sumij \sx \sV \} \\
      = & - \min\limits_{\sx,\sV} \{ \max\limits_{\salpha,\sbeta,\sgamma \ge 0} \{ - \sumij \sx \sV \\
        & + \sumk \salpha [\sumij \sx \sW(\sV) - \sB] \\
        & + \sumi \sbeta (\sumj \sx - 1) + \sumij \sgamma(-\sx) \} \} \\
      = & - \min\limits_{\sx,\sV} \{ \max\limits_{\salpha,\sbeta,\sgamma \ge 0} \{ - \sumk \salpha \sB - \sumi \sbeta \\
        & + \sumij \sx [-\sV + \sumk \salpha \sW(\sV) + \sbeta - \sgamma] \} \} \\
    \le & - \max\limits_{\salpha,\sbeta,\sgamma \ge 0} \{ \min\limits_{\sx,\sV} \{ - \sumk \salpha \sB - \sumi \sbeta \\
        & + \sumij \sx [-\sV + \sumk \salpha \sW(\sV) + \sbeta - \sgamma] \} \} \\
      = & - \max\limits_{\substack{ \salpha,\sbeta \ge 0 \\ \scoreconstraint }} \{ -\sumk \salpha \sB - \sumi \sbeta \} \\
      = & \min\limits_{\substack{ \salpha,\sbeta \ge 0 \\ \scoreconstraint }} \{ \sumk \salpha \sB + \sumi \sbeta \} \\
      = & D &&
\end{flalign*}

Next, we prove that D = DD by dualizing DD.

\begin{flalign*}
    D = & \min\limits_{\salpha,\sbeta} \{ \max\limits_{\sx,\szeta,\seta \ge 0 } \{ \sumk \salpha \sB + \sumi \sbeta \\
        & + \sumij \sx[\sS(\valpha) - \sbeta] \\
        & + \sumk \szeta (-\salpha) + \sumi \seta (-\sbeta) \} \} \\
      = & \min\limits_{\salpha,\sbeta} \{ \max\limits_{\sx,\szeta,\seta \ge 0 } \{ \sumk \salpha (\sB - \szeta) \\
        & + \sumi \sbeta (1 - \sumj \sx - \seta) + \sumij \sx \sS(\valpha) \} \} \\
      = & \max\limits_{\sx,\szeta,\seta \ge 0 } \{ \min\limits_{\salpha,\sbeta} \{ \sumk \salpha (\sB - \szeta) \\
        & + \sumi \sbeta (1 - \sumj \sx - \seta) + \sumij \sx \sS(\valpha) \} \} \\
      = & \max\limits_{\substack{\sx,\szeta \ge 0 \\ \assignmentconstraint }} \{ \min\limits_{\salpha} \{
          \sumk \salpha (\sB - \szeta) + \sumij \sx \sS(\valpha) \} \} \\
      = & DD &&
\end{flalign*}

After that, we prove that P = d by dualizing the inner step of P with the outer step unchanged.

\begin{flalign*}
    P = & \max\limits_{\substack{ \sx \ge 0 \\ \assignmentconstraint }} \{
          \max\limits_{\substack{ \sV \\ \ammkpresourceconstraint }} \{
          \sumij \sx \sV \} \} \\
      = & \max\limits_{\substack{ \sx \ge 0 \\ \assignmentconstraint }} \{
          - \min\limits_{\substack{ \sV \\ \ammkpresourceconstraint }} \{
          - \sumij \sx \sV \} \} \\
      = & \max\limits_{\substack{ \sx \ge 0 \\ \assignmentconstraint }} \{
          - \min\limits_{\sV} \{ \max\limits_{\salpha \ge 0} \{
          - \sumij \sx \sV \\
        & + \sumk \salpha [\sumij \sx \sW(\sV) - B^{(k)}] \} \} \} \\
      = & \max\limits_{\substack{ \sx \ge 0 \\ \assignmentconstraint }} \{
          - \min\limits_{\sV} \{ \max\limits_{\salpha \ge 0} \{ - \sumk \salpha B^{(k)} \\
        & + \sumij \sx [-\sV + \sumk \salpha \sW(\sV)] \} \} \} \\
      = & \max\limits_{\substack{ \sx \ge 0 \\ \assignmentconstraint }} \{
          - \max\limits_{\salpha \ge 0} \{ \min\limits_{\sV} \{ - \sumk \salpha B^{(k)} \\
        & + \sumij \sx [-\sV + \sumk \salpha \sW(\sV)] \} \} \} \\
      = & \max\limits_{\substack{ \sx \ge 0 \\ \assignmentconstraint }} \{
          - \max\limits_{\salpha \ge 0} \{ - \sumk \salpha \sB - \sumij \sx \sS(\valpha) \} \} \\
      = & \max\limits_{\substack{ \sx \ge 0 \\ \assignmentconstraint }} \{
          \min\limits_{\salpha \ge 0} \{ \sumk \salpha \sB + \sumij \sx \sS(\valpha) \} \} \\
      = & d &&
\end{flalign*}

At last, we prove that d = dd by dualizing the inner step of d with the outer step unchanged.

\begin{flalign*}
    d = & \max\limits_{\substack{ \sx \ge 0 \\ \assignmentconstraint }} \{
          \min\limits_{\salpha} \{ \max\limits_{\szeta \ge 0} \{
          \sumk \salpha \sB + \sumij \sx \sS(\valpha) \\
        & + \sumk \szeta (-\salpha)\} \} \} \\
      = & \max\limits_{\substack{ \sx \ge 0 \\ \assignmentconstraint }} \{
          \min\limits_{\salpha} \{ \max\limits_{\szeta \ge 0} \{
          \sumk \salpha (\sB - \szeta) + \sumij \sx \sS(\valpha) \} \} \} \\
      = & \max\limits_{\substack{ \sx \ge 0 \\ \assignmentconstraint }} \{
          \max\limits_{\szeta \ge 0} \{ \min\limits_{\salpha} \{
          \sumk \salpha (\sB - \szeta) + \sumij \sx \sS(\valpha) \} \} \} \\
      = & \max\limits_{\substack{ \sx,\szeta \ge 0 \\ \assignmentconstraint }} \{
          \min\limits_{\salpha} \{
          \sumk \salpha (\sB - \szeta) + \sumij \sx \sS(\valpha) \} \} \\
      = & dd &&
\end{flalign*}

It's obvious that DD and dd are of the same form, so DD = dd. As a result, we have P = D and strong duality holds.

\bibliographystyle{ACM-Reference-Format}
\bibliography{DSP}


\begin{thebibliography}{00}


\ifx \showCODEN    \undefined \def \showCODEN     #1{\unskip}     \fi
\ifx \showDOI      \undefined \def \showDOI       #1{#1}\fi
\ifx \showISBNx    \undefined \def \showISBNx     #1{\unskip}     \fi
\ifx \showISBNxiii \undefined \def \showISBNxiii  #1{\unskip}     \fi
\ifx \showISSN     \undefined \def \showISSN      #1{\unskip}     \fi
\ifx \showLCCN     \undefined \def \showLCCN      #1{\unskip}     \fi
\ifx \shownote     \undefined \def \shownote      #1{#1}          \fi
\ifx \showarticletitle \undefined \def \showarticletitle #1{#1}   \fi
\ifx \showURL      \undefined \def \showURL       {\relax}        \fi
\providecommand\bibfield[2]{#2}
\providecommand\bibinfo[2]{#2}
\providecommand\natexlab[1]{#1}
\providecommand\showeprint[2][]{arXiv:#2}

\bibitem[\protect\citeauthoryear{Cai, Ren, Zhang, Malialis, Wang, Yu, and
  Guo}{Cai et~al\mbox{.}}{2017}]%
        {Reinforce2017}
\bibfield{author}{\bibinfo{person}{Han Cai}, \bibinfo{person}{Kan Ren},
  \bibinfo{person}{Weinan Zhang}, \bibinfo{person}{Kleanthis Malialis},
  \bibinfo{person}{Jun Wang}, \bibinfo{person}{Yong Yu}, {and}
  \bibinfo{person}{Defeng Guo}.} \bibinfo{year}{2017}\natexlab{}.
\newblock \showarticletitle{Real-Time Bidding by Reinforcement Learning in
  Display Advertising}. In \bibinfo{booktitle}{{\em Proceedings of the Tenth
  ACM International Conference on Web Search and Data Mining}} {\em
  (\bibinfo{series}{WSDM '17})}. \bibinfo{publisher}{ACM},
  \bibinfo{address}{New York, NY, USA}, \bibinfo{pages}{661--670}.
\newblock
\showISBNx{978-1-4503-4675-7}
\showDOI{%
\url{https://doi.org/10.1145/3018661.3018702}}


\bibitem[\protect\citeauthoryear{Chapelle}{Chapelle}{2014}]%
        {DelayedFeedback}
\bibfield{author}{\bibinfo{person}{Olivier Chapelle}.}
  \bibinfo{year}{2014}\natexlab{}.
\newblock \showarticletitle{Modeling delayed feedback in display advertising}.
  In \bibinfo{booktitle}{{\em Proceedings of the 20th ACM SIGKDD international
  conference on Knowledge discovery and data mining}}. ACM,
  \bibinfo{pages}{1097--1105}.
\newblock


\bibitem[\protect\citeauthoryear{Chen, Berkhin, Anderson, and Devanur}{Chen
  et~al\mbox{.}}{2011}]%
        {YeChen2011}
\bibfield{author}{\bibinfo{person}{Ye Chen}, \bibinfo{person}{Pavel Berkhin},
  \bibinfo{person}{Bo Anderson}, {and} \bibinfo{person}{Nikhil~R Devanur}.}
  \bibinfo{year}{2011}\natexlab{}.
\newblock \showarticletitle{Real-time bidding algorithms for performance-based
  display ad allocation}. In \bibinfo{booktitle}{{\em Proceedings of the 17th
  ACM SIGKDD international conference on Knowledge discovery and data mining}}.
  ACM, \bibinfo{pages}{1307--1315}.
\newblock


\bibitem[\protect\citeauthoryear{Cui, Zhang, Li, and Mao}{Cui
  et~al\mbox{.}}{2011}]%
        {YingCui2011}
\bibfield{author}{\bibinfo{person}{Ying Cui}, \bibinfo{person}{Ruofei Zhang},
  \bibinfo{person}{Wei Li}, {and} \bibinfo{person}{Jianchang Mao}.}
  \bibinfo{year}{2011}\natexlab{}.
\newblock \showarticletitle{Bid landscape forecasting in online ad exchange
  marketplace}. In \bibinfo{booktitle}{{\em Proceedings of the 17th ACM SIGKDD
  international conference on Knowledge discovery and data mining}}. ACM,
  \bibinfo{pages}{265--273}.
\newblock


\bibitem[\protect\citeauthoryear{Geyik, Faleev, Shen, O’Donnell, and
  Kolay}{Geyik et~al\mbox{.}}{2016}]%
        {Joint2016}
\bibfield{author}{\bibinfo{person}{Sahin~Cem Geyik}, \bibinfo{person}{Sergey
  Faleev}, \bibinfo{person}{Jianqiang Shen}, \bibinfo{person}{Sean
  O’Donnell}, {and} \bibinfo{person}{Santanu Kolay}.}
  \bibinfo{year}{2016}\natexlab{}.
\newblock \showarticletitle{Joint Optimization of Multiple Performance Metrics
  in Online Video Advertising}. In \bibinfo{booktitle}{{\em Proceedings of the
  22nd ACM SIGKDD International Conference on Knowledge Discovery and Data
  Mining}}. ACM, \bibinfo{pages}{471--480}.
\newblock


\bibitem[\protect\citeauthoryear{He, Pan, Jin, Xu, Liu, Xu, Shi, Atallah,
  Herbrich, Bowers, et~al\mbox{.}}{He et~al\mbox{.}}{2014}]%
        {Facebook2014}
\bibfield{author}{\bibinfo{person}{Xinran He}, \bibinfo{person}{Junfeng Pan},
  \bibinfo{person}{Ou Jin}, \bibinfo{person}{Tianbing Xu}, \bibinfo{person}{Bo
  Liu}, \bibinfo{person}{Tao Xu}, \bibinfo{person}{Yanxin Shi},
  \bibinfo{person}{Antoine Atallah}, \bibinfo{person}{Ralf Herbrich},
  \bibinfo{person}{Stuart Bowers}, {et~al\mbox{.}}}
  \bibinfo{year}{2014}\natexlab{}.
\newblock \showarticletitle{Practical lessons from predicting clicks on ads at
  facebook}. In \bibinfo{booktitle}{{\em Proceedings of the Eighth
  International Workshop on Data Mining for Online Advertising}}. ACM,
  \bibinfo{pages}{1--9}.
\newblock


\bibitem[\protect\citeauthoryear{Juan, Zhuang, Chin, and Lin}{Juan
  et~al\mbox{.}}{2016}]%
        {FFM2016}
\bibfield{author}{\bibinfo{person}{Yuchin Juan}, \bibinfo{person}{Yong Zhuang},
  \bibinfo{person}{Wei-Sheng Chin}, {and} \bibinfo{person}{Chih-Jen Lin}.}
  \bibinfo{year}{2016}\natexlab{}.
\newblock \showarticletitle{Field-aware factorization machines for CTR
  prediction}. In \bibinfo{booktitle}{{\em Proceedings of the 10th ACM
  Conference on Recommender Systems}}. ACM, \bibinfo{pages}{43--50}.
\newblock


\bibitem[\protect\citeauthoryear{Lai, Shih, Huang, and Chen}{Lai
  et~al\mbox{.}}{2016}]%
        {Traffic2016}
\bibfield{author}{\bibinfo{person}{Hsu-Chao Lai}, \bibinfo{person}{Wen-Yueh
  Shih}, \bibinfo{person}{Jiun-Long Huang}, {and} \bibinfo{person}{Yi-Cheng
  Chen}.} \bibinfo{year}{2016}\natexlab{}.
\newblock \showarticletitle{Predicting traffic of online advertising in
  real-time bidding systems from perspective of demand-side platforms}. In
  \bibinfo{booktitle}{{\em Big Data (Big Data), 2016 IEEE International
  Conference on}}. IEEE, \bibinfo{pages}{3491--3498}.
\newblock


\bibitem[\protect\citeauthoryear{Lee, Jalali, and Dasdan}{Lee
  et~al\mbox{.}}{2013}]%
        {Pacing2013}
\bibfield{author}{\bibinfo{person}{Kuang-Chih Lee}, \bibinfo{person}{Ali
  Jalali}, {and} \bibinfo{person}{Ali Dasdan}.}
  \bibinfo{year}{2013}\natexlab{}.
\newblock \showarticletitle{Real time bid optimization with smooth budget
  delivery in online advertising}. In \bibinfo{booktitle}{{\em Proceedings of
  the Seventh International Workshop on Data Mining for Online Advertising}}.
  ACM, \bibinfo{pages}{1}.
\newblock


\bibitem[\protect\citeauthoryear{Lee, Orten, Dasdan, and Li}{Lee
  et~al\mbox{.}}{2012}]%
        {CVR}
\bibfield{author}{\bibinfo{person}{Kuang-chih Lee}, \bibinfo{person}{Burkay
  Orten}, \bibinfo{person}{Ali Dasdan}, {and} \bibinfo{person}{Wentong Li}.}
  \bibinfo{year}{2012}\natexlab{}.
\newblock \showarticletitle{Estimating conversion rate in display advertising
  from past erformance data}. In \bibinfo{booktitle}{{\em Proceedings of the
  18th ACM SIGKDD international conference on Knowledge discovery and data
  mining}}. ACM, \bibinfo{pages}{768--776}.
\newblock


\bibitem[\protect\citeauthoryear{Li and Guan}{Li and Guan}{2014}]%
        {XiangLi2014}
\bibfield{author}{\bibinfo{person}{Xiang Li} {and} \bibinfo{person}{Devin
  Guan}.} \bibinfo{year}{2014}\natexlab{}.
\newblock \showarticletitle{Programmatic buying bidding strategies with win
  rate and winning price estimation in real time mobile advertising}. In
  \bibinfo{booktitle}{{\em Pacific-Asia Conference on Knowledge Discovery and
  Data Mining}}. Springer, \bibinfo{pages}{447--460}.
\newblock


\bibitem[\protect\citeauthoryear{McMahan, Holt, Sculley, Young, Ebner, Grady,
  Nie, Phillips, Davydov, Golovin, et~al\mbox{.}}{McMahan
  et~al\mbox{.}}{2013}]%
        {Google2013}
\bibfield{author}{\bibinfo{person}{H~Brendan McMahan}, \bibinfo{person}{Gary
  Holt}, \bibinfo{person}{David Sculley}, \bibinfo{person}{Michael Young},
  \bibinfo{person}{Dietmar Ebner}, \bibinfo{person}{Julian Grady},
  \bibinfo{person}{Lan Nie}, \bibinfo{person}{Todd Phillips},
  \bibinfo{person}{Eugene Davydov}, \bibinfo{person}{Daniel Golovin},
  {et~al\mbox{.}}} \bibinfo{year}{2013}\natexlab{}.
\newblock \showarticletitle{Ad click prediction: a view from the trenches}. In
  \bibinfo{booktitle}{{\em Proceedings of the 19th ACM SIGKDD international
  conference on Knowledge discovery and data mining}}. ACM,
  \bibinfo{pages}{1222--1230}.
\newblock


\bibitem[\protect\citeauthoryear{Mehta et~al\mbox{.}}{Mehta
  et~al\mbox{.}}{2013}]%
        {Mehta}
\bibfield{author}{\bibinfo{person}{Aranyak Mehta} {et~al\mbox{.}}}
  \bibinfo{year}{2013}\natexlab{}.
\newblock \showarticletitle{Online matching and ad allocation}.
\newblock \bibinfo{journal}{{\em Foundations and Trends{\textregistered} in
  Theoretical Computer Science\/}} \bibinfo{volume}{8}, \bibinfo{number}{4}
  (\bibinfo{year}{2013}), \bibinfo{pages}{265--368}.
\newblock


\bibitem[\protect\citeauthoryear{Perlich, Dalessandro, Hook, Stitelman, Raeder,
  and Provost}{Perlich et~al\mbox{.}}{2012}]%
        {M6D}
\bibfield{author}{\bibinfo{person}{Claudia Perlich}, \bibinfo{person}{Brian
  Dalessandro}, \bibinfo{person}{Rod Hook}, \bibinfo{person}{Ori Stitelman},
  \bibinfo{person}{Troy Raeder}, {and} \bibinfo{person}{Foster Provost}.}
  \bibinfo{year}{2012}\natexlab{}.
\newblock \showarticletitle{Bid optimizing and inventory scoring in targeted
  online advertising}. In \bibinfo{booktitle}{{\em Proceedings of the 18th ACM
  SIGKDD international conference on Knowledge discovery and data mining}}.
  ACM, \bibinfo{pages}{804--812}.
\newblock


\bibitem[\protect\citeauthoryear{Vickrey}{Vickrey}{1961}]%
        {SSPA1961}
\bibfield{author}{\bibinfo{person}{William Vickrey}.}
  \bibinfo{year}{1961}\natexlab{}.
\newblock \showarticletitle{Counterspeculation, auctions, and competitive
  sealed tenders}.
\newblock \bibinfo{journal}{{\em The Journal of finance\/}}
  \bibinfo{volume}{16}, \bibinfo{number}{1} (\bibinfo{year}{1961}),
  \bibinfo{pages}{8--37}.
\newblock


\bibitem[\protect\citeauthoryear{Wu, Yeh, and Chen}{Wu et~al\mbox{.}}{2015}]%
        {Wu2015}
\bibfield{author}{\bibinfo{person}{Wush Chi-Hsuan Wu}, \bibinfo{person}{Mi-Yen
  Yeh}, {and} \bibinfo{person}{Ming-Syan Chen}.}
  \bibinfo{year}{2015}\natexlab{}.
\newblock \showarticletitle{Predicting winning price in real time bidding with
  censored data}. In \bibinfo{booktitle}{{\em Proceedings of the 21th ACM
  SIGKDD International Conference on Knowledge Discovery and Data Mining}}.
  ACM, \bibinfo{pages}{1305--1314}.
\newblock


\bibitem[\protect\citeauthoryear{Xu, Lee, Li, Qi, and Lu}{Xu
  et~al\mbox{.}}{2015}]%
        {Throttle2015}
\bibfield{author}{\bibinfo{person}{Jian Xu}, \bibinfo{person}{Kuang-chih Lee},
  \bibinfo{person}{Wentong Li}, \bibinfo{person}{Hang Qi}, {and}
  \bibinfo{person}{Quan Lu}.} \bibinfo{year}{2015}\natexlab{}.
\newblock \showarticletitle{Smart Pacing for Effective Online Ad Campaign
  Optimization}. In \bibinfo{booktitle}{{\em Proceedings of the 21th ACM SIGKDD
  International Conference on Knowledge Discovery and Data Mining}} {\em
  (\bibinfo{series}{KDD '15})}. \bibinfo{publisher}{ACM}, \bibinfo{address}{New
  York, NY, USA}, \bibinfo{pages}{2217--2226}.
\newblock
\showISBNx{978-1-4503-3664-2}
\showDOI{%
\url{https://doi.org/10.1145/2783258.2788615}}


\bibitem[\protect\citeauthoryear{Xu, Shao, Ma, Lee, Qi, and Lu}{Xu
  et~al\mbox{.}}{2016}]%
        {Lift2016}
\bibfield{author}{\bibinfo{person}{Jian Xu}, \bibinfo{person}{Xuhui Shao},
  \bibinfo{person}{Jianjie Ma}, \bibinfo{person}{Kuang-chih Lee},
  \bibinfo{person}{Hang Qi}, {and} \bibinfo{person}{Quan Lu}.}
  \bibinfo{year}{2016}\natexlab{}.
\newblock \showarticletitle{Lift-based bidding in ad selection}. In
  \bibinfo{booktitle}{{\em Thirtieth AAAI Conference on Artificial
  Intelligence}}.
\newblock


\bibitem[\protect\citeauthoryear{Zhang, Zhang, Rong, Ren, Li, and Wang}{Zhang
  et~al\mbox{.}}{2017}]%
        {Risk2017}
\bibfield{author}{\bibinfo{person}{Haifeng Zhang}, \bibinfo{person}{Weinan
  Zhang}, \bibinfo{person}{Yifei Rong}, \bibinfo{person}{Kan Ren},
  \bibinfo{person}{Wenxin Li}, {and} \bibinfo{person}{Jun Wang}.}
  \bibinfo{year}{2017}\natexlab{}.
\newblock \showarticletitle{Managing Risk of Bidding in Display Advertising}.
  In \bibinfo{booktitle}{{\em Proceedings of the Tenth ACM International
  Conference on Web Search and Data Mining}} {\em (\bibinfo{series}{WSDM
  '17})}. \bibinfo{publisher}{ACM}, \bibinfo{address}{New York, NY, USA},
  \bibinfo{pages}{581--590}.
\newblock
\showISBNx{978-1-4503-4675-7}
\showDOI{%
\url{https://doi.org/10.1145/3018661.3018701}}


\bibitem[\protect\citeauthoryear{Zhang and Wang}{Zhang and Wang}{2015}]%
        {WeinanZhang2015}
\bibfield{author}{\bibinfo{person}{Weinan Zhang} {and} \bibinfo{person}{Jun
  Wang}.} \bibinfo{year}{2015}\natexlab{}.
\newblock \showarticletitle{Statistical arbitrage mining for display
  advertising}. In \bibinfo{booktitle}{{\em Proceedings of the 21th ACM SIGKDD
  International Conference on Knowledge Discovery and Data Mining}}. ACM,
  \bibinfo{pages}{1465--1474}.
\newblock


\bibitem[\protect\citeauthoryear{Zhang, Yuan, and Wang}{Zhang
  et~al\mbox{.}}{2014}]%
        {WeinanZhang2014}
\bibfield{author}{\bibinfo{person}{Weinan Zhang}, \bibinfo{person}{Shuai Yuan},
  {and} \bibinfo{person}{Jun Wang}.} \bibinfo{year}{2014}\natexlab{}.
\newblock \showarticletitle{Optimal real-time bidding for display advertising}.
  In \bibinfo{booktitle}{{\em Proceedings of the 20th ACM SIGKDD international
  conference on Knowledge discovery and data mining}}. ACM,
  \bibinfo{pages}{1077--1086}.
\newblock


\end{thebibliography}

\end{document}